 \newcommand{\mymacro}[1]{{#1}}
\newcommand{\defn}[1]{\textbf{#1}}
\newcommand{\defeq}{\mathrel{\stackrel{\textnormal{\tiny def}}{=}}}
\newcommand{\ignore}[1]{}
\newcommand{\expandLater}[1]{}
\newcommand{\justification}[1]{%
    \refstepcounter{equation}%
    \tag{\theequation \textcolor{black!50}{, \footnotesize{#1}}}
}
\newcommand{\pos}{\emoji[twitter]{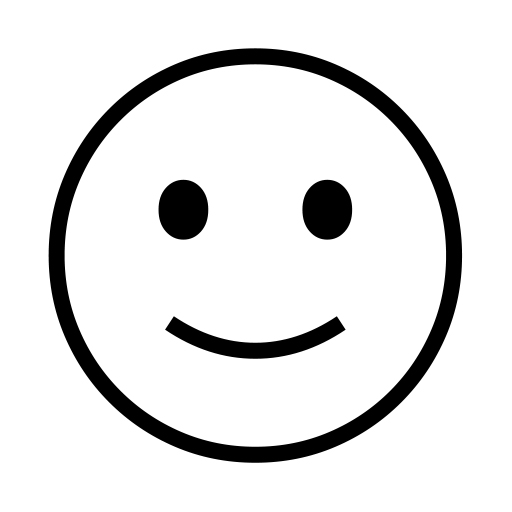}}
\newcommand{\neut}{\emoji[twitter]{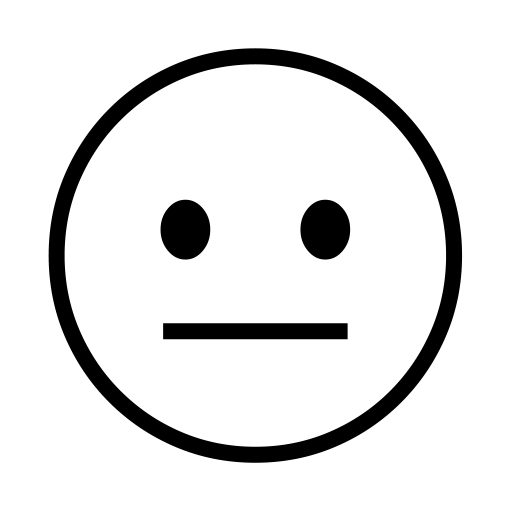}}
\newcommand{\negative}{\emoji[twitter]{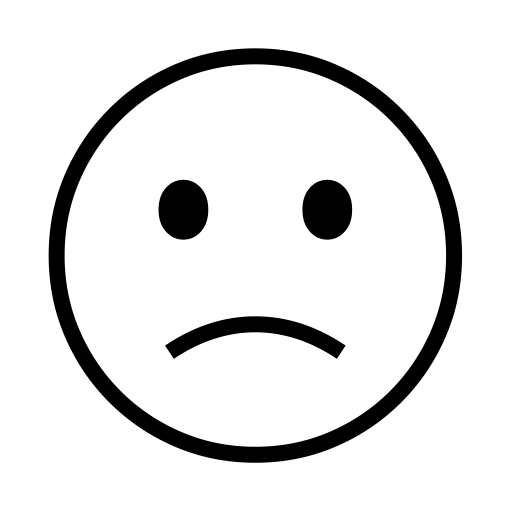}}
\newcommand{\Vspace}{\mymacro{{V}}}
\newcommand{\GLV}{\mymacro{{\GL(\Vspace)}}}
\newcommand{\AffV}{\mymacro{{\Aff(\Vspace)}}}
\newcommand{\AffVW}{\mymacro{{\Aff(\Vspace, W)}}}
\newcommand{\IsoV}{\mymacro{{\Iso(\Vspace)}}}
\newcommand{\OrtV}{\mymacro{{\Ort(\Vspace)}}}
\newcommand{\oset}[3][0ex]{%
  \mathrel{\mathop{#3}\limits^{
    \vbox to#1{\kern-2\ex@
    \hbox{$\scriptstyle#2$}\vss}}}}
\newcommand{\metric}{\mymacro{d}}
\newcommand{\hemimetric}{\mymacro{d}}
\newcommand{\hemimetricInf}{\mymacro{\hemimetric_{\scaleto{\infty}{3pt}}}}
\newcommand{\hemimetricS}{\mymacro{\hemimetric_{\scaleto{S}{4pt}}}}
\newcommand{\onesidedmetric}{\mymacro{\hemimetric}}
\newcommand{\onesidedmetricS}{\mymacro{\hemimetric_{\scaleto{S}{4pt}}}}
\newcommand{\onesidedmetricAff}{\mymacro{\hemimetric_{\AffV}}}
\newcommand{\onesidedmetricIso}{\mymacro{\hemimetric_{\IsoV}}}
\newcommand{\baseHausdorffMap}{\mymacro{d}}
\newcommand{\preHausdorffMap}{\mymacro{\hemimetric^{\scaleto{\sH}{4pt}}}}
\newcommand{\preHausdorffMapDeltaInf}{\mymacro{\hemimetric_{\scaleto{\infty}{3pt}, \scaleto{\Delta^{N-1}}{5pt}}^{\scaleto{\sH}{4pt}}}}
\newcommand{\hausdorffMap}{\mymacro{\baseHausdorffMap^{\scaleto{\sH}{4pt}}}}
\newcommand{\hausdorffMapInf}{\mymacro{\baseHausdorffMap_{\scaleto{\infty}{3pt}}^{\scaleto{\sH}{4pt}}}}
\newcommand{\hausdorffMapInftyW}{\mymacro{\baseHausdorffMap_{\scaleto{\infty}{3pt},\scaleto{W}{4pt}}^{\scaleto{\sH}{4pt}}}}
\newcommand{\hausdorffMapInftyDelta}{\mymacro{\baseHausdorffMap_{\scaleto{\infty}{3pt},\scaleto{\Delta^{N-1}}{5pt}}^{\scaleto{\sH}{4pt}}}}
\newcommand{\pointHausdorffMap}{\mymacro{\baseHausdorffMap^{\scaleto{\sH}{4pt}}}}
\newcommand{\pointHausdorffMapS}{\mymacro{\baseHausdorffMap_{\scaleto{S}{4pt}}^{\scaleto{\sH}{4pt}}}}
\newcommand{\pointHausdorffMapAff}{\mymacro{\baseHausdorffMap_{\AffV}^{\scaleto{\sH}{4pt}}}}
\newcommand{\pointHausdorffMapVW}{\mymacro{\baseHausdorffMap_{\Aff(V,W)}^{\scaleto{\sH}{4pt}}}}
\newcommand{\pointHausdorffMapVSet}{\mymacro{\baseHausdorffMap_{\Vset(V,\Delta)}^{\scaleto{\sH}{4pt}}}}
\newcommand{\symHausdorffMap}{\mymacro{\baseHausdorffMap^{\scaleto{\sH}{4pt},\; \scaleto{\text{sym}}{4pt}}}}
\newcommand{\symHausdorffMapInf}{\mymacro{\baseHausdorffMap_{\scaleto{\infty}{3pt}}^{\scaleto{\sH}{4pt},\; \scaleto{\text{sym}}{4pt}}}}
\newcommand{\symHausdorffMapS}{\mymacro{\baseHausdorffMap_{\scaleto{S}{4pt}}^{\scaleto{\sH}{4pt},\; \scaleto{\text{sym}}{4pt}}}}
\newcommand{\symHausdorffMapAff}{\mymacro{\baseHausdorffMap_{\AffV}^{\scaleto{\sH}{4pt},\; \scaleto{\text{sym}}{4pt}}}}
\newcommand{\onesidedmetricPsiPrime}{\mymacro{\hat{\metric}_{\scaleto{\psi'}{6pt}}}}
\newcommand{\pointHausdorffMapApproxVSet}{\mymacro{\hat{\baseHausdorffMap}_{\Vset(V,\Delta)}^{\scaleto{\sH}{4pt}}}}
\newcommand{\onesidedmetricAffApprox}{\mymacro{\hat{\hemimetric}_{\AffV}}}
\newcommand{\Vset}{\mymacro{{\mathcal{V}}}}
\newcommand{\NTo}[1]{{\mymacro{\left[ #1 \right]}}}
\DeclareMathOperator{\rank}{rank}
\newcommand{\R}{\mymacro{\mathbb{R}}}
\newcommand{\set}[1]{\mymacro{\left\{ #1 \right\}}}
\newcommand{\alphabet}{\mymacro{\Sigma}}
\newcommand{\kleene}[1]{\mymacro{#1^*}}
\newcommand{\str}{\mymacro{\boldsymbol{y}}}
\newcommand{\strlt}{\mymacro{\str_{<\tstep}}}
\newcommand{\sym}{\mymacro{y}}
\newcommand{\pLM}{\mymacro{p}}
\newcommand{\enc}{\mymacro{\boldsymbol{h}}}
\newcommand{\encoders}{\mymacro{\sE_\Vspace}}
\newcommand{\boundedEncoders}{\mymacro{\sE_b}}
\newcommand{\encg}{\mymacro{\boldsymbol{g}}}
\newcommand{\encFun}[1]{\enc\left(#1\right)}
\newcommand{\encgFun}[1]{\encg\left(#1\right)}
\newcommand{\pLMh}{\mymacro{p_{\scaleto{\enc}{4pt}}}}
\newcommand{\eos}{\mymacro{\textsc{eos}}}
\newcommand{\featurespacedim}{\mymacro{D}}
\newcommand{\symt}{\mymacro{\sym_{\tstep}}}
\newcommand{\nsamples}{\mymacro{N}}
\newcommand{\tstep}{\mymacro{t}}
\newcommand{\outMtx}{\mymacro{\mE}}
\newcommand{\hiddDim}{\mymacro{D}}
\newcommand{\Rd}{\mymacro{\mathbb{R}^\featurespacedim}}
\DeclareMathSymbol{\mlq}{\mathord}{operators}{``} 
\DeclareMathSymbol{\mrq}{\mathord}{operators}{`'} 
\newcommand{\negterm}[1]{\mymacro{{\raise.17ex\hbox{$\scriptstyle\sim$}} #1}}
\def\1{\mathbf{1}}
\def\vb{\mymacro{\mathbf{b}}}
\def\vv{\mymacro{\mathbf{v}}}
\def\vx{\mymacro{\mathbf{x}}}
\def\evx{\mymacro{x}}
\def\mA{\mymacro{\mathbf{A}}}
\def\mB{\mymacro{\mathbf{B}}}
\def\mC{\mymacro{\mathbf{C}}}
\def\mE{\mymacro{\mathbf{E}}}
\def\mG{\mymacro{\mathbf{G}}}
\def\mH{\mymacro{\mathbf{H}}}
\def\mI{\mymacro{\mathbf{I}}}
\def\mK{\mymacro{\mathbf{K}}}
\def\mM{\mymacro{\mathbf{M}}}
\def\mQ{\mymacro{\mathbf{Q}}}
\def\mR{\mymacro{\mathbf{R}}}
\def\mU{\mymacro{\mathbf{U}}}
\def\mV{\mymacro{\mathbf{V}}}
\def\mW{\mymacro{\mathbf{W}}}
\def\mSigma{\mymacro{\boldsymbol{\Sigma}}}
\DeclareMathAlphabet{\mathsfit}{\encodingdefault}{\sfdefault}{m}{sl}
\SetMathAlphabet{\mathsfit}{bold}{\encodingdefault}{\sfdefault}{bx}{n}
\newcommand{\Aff}{\mymacro{\mathrm{Aff}}}
\newcommand{\Ext}{\mymacro{\mathrm{Ext}}}
\newcommand{\Iso}{\mymacro{\mathrm{Iso}}}
\newcommand{\GL}{\mymacro{\mathrm{GL}}}
\newcommand{\Ort}{\mymacro{\mathrm{O}}}
\def\sB{\mymacro{\mathcal{B}}}
\def\sD{\mymacro{\mathcal{D}}}
\def\sE{\mymacro{\mathcal{E}}}
\def\sH{\mymacro{\mathcal{H}}}
\def\sP{\mymacro{\mathcal{P}}}
\def\sT{\mymacro{\mathcal{T}}}
\def\sY{\mymacro{\mathcal{Y}}}
\def\sZ{\mymacro{\mathcal{Z}}}
\newcommand{\N}{\mymacro{\mathbb{N}}}
\newcommand{\softmax}{\mymacro{\mathrm{softmax}}}
\DeclareMathOperator*{\argmax}{\mymacro{argmax}}
\DeclareMathOperator*{\argmin}{\mymacro{argmin}}
\newcommand{\ethz}{1}
\newcommand{\mpi}{4}
\newcommand{\fudan}{3}
\newcommand{\tsinghua}{2}
\title{On Affine Homotopy between Language Encoders}
\author{
 Robin S. M. Chan$^{\ethz}$ ~\;~
 Reda Boumasmoud$^{\ethz}$ ~\;~
 Anej Svete$^{\ethz}$ ~\;~
 \textbf{Yuxin Ren}$^{\tsinghua}$ ~\;~ \\
 \textbf{Qipeng Guo}$^{\fudan}$ ~\;~ 
 \textbf{Zhijing Jin}$^{\ethz, \mpi}$ ~\;~
 \textbf{Shauli Ravfogel}$^{\ethz}$ ~\;~ 
 \textbf{Mrinmaya Sachan}$^{\ethz}$ ~\;~ \\
  \textbf{Bernhard Schölkopf}$^{\ethz, \mpi}$ ~\;~
 \textbf{Mennatallah El-Assady}$^{\ethz}$ ~\;~
  \textbf{Ryan Cotterell}$^{\ethz}$
\\
 $^{\ethz}$ETH Zürich~\;~ $^{\tsinghua}$Tsinghua University~\;~ $^{\fudan}$Fudan University~\;~  
 \\ $^{\mpi}$Max Plank Institute for Intelligent Systems
}
\begin{document}

\maketitle

\begin{abstract}
    Pre-trained language encoders---functions that represent text as vectors---are an integral component of many NLP tasks.
    We tackle a natural question in language encoder analysis: What does it mean for two encoders to be similar?
    We contend that a faithful measure of similarity needs to be \emph{intrinsic}, that is, task-independent, yet still be informative of \emph{extrinsic} similarity---the performance on downstream tasks.
    It is common to consider two encoders similar if they are \emph{homotopic}, i.e., if they can be aligned through some transformation.\footnote{Homotopy, from the Greek \textgreek{ὁμός} (homo; same) and \textgreek{τόπος} (topos; place), refers to a continuous transformation between functions or shapes, showing they can be deformed into one another without breaking or tearing.\looseness=-1}
    In this spirit, we study the properties of \emph{affine} alignment of language encoders and its implications on extrinsic similarity.
    We find that while affine alignment is fundamentally an asymmetric notion of similarity, it is still informative of extrinsic similarity.
    We confirm this on datasets of natural language representations.
    Beyond providing useful bounds on extrinsic similarity, affine intrinsic similarity also allows us to begin uncovering the structure of the space of pre-trained encoders by defining an order over them.
    \newline
     \newline
     \vspace{0.1em}
      \hspace{5em}\includegraphics[width=1.05em,height=1.0em]{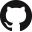}\hspace{.3em}\parbox{\dimexpr\linewidth-2\fboxsep-2\fboxrule}
      {\url{https://github.com/chanr0/affine-homotopy}}
      \vspace{-0.4cm}
\end{abstract}
\section{Introduction} \label{sec:intro}
A common paradigm in modern natural language processing (NLP) is to pre-train a \defn{language encoder} on a large swathe of natural language text. 
Then, a task-specific model is fit (\emph{fine-tuned}) using the language encoder as the representation function of the text.
More formally, a language encoder is a function $\enc\colon \kleene{\alphabet} \to \Rd$, i.e., a function that maps a string over an alphabet $\alphabet$ to a finite-dimensional vector. 
Now, consider sentiment analysis as an informative example of a task.
Suppose our goal is to classify a string $\str \in \kleene{\alphabet}$ as one of three polarities $\Pi = \{\pos, \neut, \negative\}$.
Then, the probability of $\str$ exhibiting a specific polarity is often given by a log-linear model, e.g., the probability of $\pos$ is
\begin{equation} \label{eq:sent-analysis-example}
    \pLM(\pos \mid \str) = \softmax(\outMtx \, \enc(\str) + \vb)_{\pos}
\end{equation}
where $\outMtx \in \R^{3 \times \featurespacedim}$,  $\vb \in \R^3$ and $\softmax \colon \R^N \rightarrow \Delta^{N - 1}$. 
Empirically, using a pre-trained encoder $\enc$ leads to significantly better classifier performance than training a log-linear model from scratch.

In the context of the widespread deployment of language encoders, this paper tackles a natural question: Given two language encoders $\enc$ and $\encg$, how can we judge to what extent they are similar?
This question is of practical importance---recent studies have shown that even small variations in the random seed used for training can result in significant performance differences on downstream tasks between models with the same architecture \citep{dodge2020finetuning,sellam2021multiberts}
In this case, we say that two such language encoders exhibit an \emph{extrinsic} difference, i.e., the difference between two encoders manifests itself when considering their performance on a \emph{downstream} task.
However, we also seek an \emph{intrinsic} notion of similarity between two language encoders, i.e., a notion of similarity that is independent of any particular downstream task.
Moreover, we may hope that a good notion of intrinsic similarity would allow us to construct a notion of extrinsic similarity that holds for \emph{all} downstream tasks.

Existing work studies language encoder similarity by evaluating whether two encoders produce similar representations for a finite dataset of strings \citep[][\emph{inter alia}]{Hardoon2004CanonicalCA, pmlr-v97-kornblith19a,williams2021generalizedsm,boix2024gulp}, often by analyzing whether the representation sets can be approximately \emph{linearly} aligned \cite{pmlr-v97-kornblith19a, pmlr-v44-li15convergent}. 
More formally, two encoders are considered similar if there exists a matrix $\mA$ such that $\enc(\str) \approx \mA \, \encg(\str)$ holds for strings $\str$ in some finite set $\sD \subset \kleene{\alphabet}$.\footnote{We discuss related work in more detail in \Cref{sec:related-work}.}
This assumes that examining finitely many outputs provides sufficient insight into encoder behavior.
In contrast, we set out to study the relationships between language encoders, i.e., functions, themselves.
This decision, rather than being just a technicality, allows us to derive a richer understanding of encoder relationships, revealing properties and insights that remain obscured under conventional finite-set analysis. 
Concretely, we ask what notions of similarity between encoders one could consider and what they imply for their relationships.

The main contributions of the paper are of a theoretical nature.
We first define an (extended) metric space on language encoders.
We then extend this notion to account for \emph{transformations} in a broad framework of \defn{$S$-homotopy} for a set of transformations $S$, where $\encg$ is $S$-homotopic to $\enc$ if $\encg$ can be transformed into $\enc$ through some transformation in $S$.
As a concrete application of the framework, we study \emph{affine} homotopy---the similarity of $\enc$ and $\psi \circ \encg$ for affine transformations $\psi$.
The notion of intrinsic similarity induced by such one-sided alignment is not symmetric and can be seen as the \emph{cost} of transforming $\encg$ into $\enc$.
Nevertheless, we show it is informative of \emph{extrinsic} similarity: If one encoder can be affinely mapped to another, we can guarantee that it also performs similarly on downstream tasks.
We confirm this empirically by studying the intrinsic and extrinsic similarities of various pre-trained encoders, where we observe a positive correlation between intrinsic and extrinsic similarity. 
Beyond measuring similarity, homotopy also allows us to define a form of hierarchy on the space of encoders, elucidating a structure in which some encoders are more informative than others.
Such an order is also suggested by our experiments, where we find that certain encoders are easier to map to than others which shows in the rank of the learned representations and affects their transfer learning ability.\looseness=-1

\section{Language Encoders}\label{sec:language-encoders}

Let $\alphabet$ be an alphabet---a finite, non-empty set of symbols $\sym$---and $\eos \not\in \alphabet$ a distinguished \underline{e}nd-\underline{o}f-\underline{s}tring symbol. 
With $\kleene{\alphabet} \defeq \bigcup_{n=0}^\infty \alphabet^n $ we denote the Kleene closure of $\alphabet$, the set of all strings $\str$. 
A \defn{language encoder} is a function $\enc\colon\kleene{\alphabet} \to \Vspace \defeq \Rd$ that maps strings to real vectors.\footnote{In principle, one could relax the replace $\Rd$ with any finite dimensional vector space.} 
We write $\encoders \defeq \Vspace^{\kleene{\alphabet}}$ for the $\R$-vector space of language encoders, and  
$\sE_{b} \defeq  \{\enc \in {\encoders} \mid \enc(\kleene{\alphabet}) \text{ is bounded}\} \subset {\encoders}$ for its sub-vector space of \textbf{bounded encoders}.

There are two common ways that language encoders are created \citep{cotterell2023formal}.
The first is through autoregressive language modeling.
A \defn{language model} (LM) is a probability distribution over $\kleene{\Sigma}$.\footnote{In the following, we assume language model \textit{tightness} to the effect that we can assume that LMs produce valid probability distributions over $\kleene{\Sigma}$ \citep{du-etal-2023-measure}.}
\defn{Autoregressive} LMs are defined through the multiplication of conditional probability distributions $\pLMh(\symt \mid \strlt)$ as
\begin{equation}
    \pLM^{\scaleto{\text{LM}}{3pt}}_{\scaleto{\enc}{4pt}}(\str) = \pLMh(\eos \mid \str) \prod_{t=1}^T \pLMh(\symt \mid \strlt),
\end{equation}
where each $\pLMh(\cdot \mid \strlt)$ is a distribution over $\alphabet \cup \{\eos\}$ \emph{parametrized} by a language encoder $\enc$:
\begin{equation}\label{eq:conditional}
    \pLMh(\symt \mid \strlt) \defeq \softmax(\outMtx \, \enc(\strlt))_{\symt},
\end{equation}
where $\outMtx \in \R^{(|\alphabet| + 1) \times \hiddDim}$.
An autoregressive LM provides a simple manner to \emph{learn} a language encoder from a dataset of strings $\sD = \{\str^{(n)}\}_{n=1}^\nsamples$
by minimizing $\sD$'s negative log-likelihood.
We may also learn a language encoder through \defn{\underline{m}asked \underline{l}anguage \underline{m}odeling} (MLM), which defines the conditional probabilities based on both sides of the masked symbol's context
\begin{equation}
    \pLMh(\symt \mid \strlt, \str_{>t}) \defeq \softmax(\outMtx \, \enc(\strlt \circ \textsc{[mask]} \circ \str_{>t}))_{\symt}.
\end{equation}
Maximizing the log-likelihood of a corpus under a language model derived from a language encoder $\enc$ with a gradient-based algorithm only requires $\enc$ to be a differentiable function of its parameters. 
Once a language encoder has been trained on a (large) corpus, its representations can be used on more fine-grained NLP tasks such as classification.
The rationale for such transfer learning is that representations $\encFun{\str}$ stemming from a performant language model also contain information useful for other downstream tasks on natural language.
An NLP practitioner might then implement a task-specific transformation of $\encFun{\str}$. 
To tackle the problem that the tasks of interest are often less resource-abundant and to keep the training costs low, task-specific transformations are usually simple, often in the form of linear transformations of $\encFun{\str}$, as in \cref{eq:sent-analysis-example}.

\section{Measuring the Alignment of Langauge Encoders}\label{sec:affine-distances}
We begin by introducing measures of affine alignment and hemi-metrics on $\encoders$. 

\subsection{Preliminaries on Hemi-Metric Spaces}
Language encoders compute representations for the infinitely many strings in $\kleene{\alphabet}$. 
In general, these representations might diverge towards $\infty$, making it necessary to talk about \emph{unbounded} encoders, where it is convenient to allow distances and norms to take extended real numbers as values.\footnote{$\overline{\R}_+$ is the set of non-negative real numbers along with the “value” $\infty$, assumed to be above all reals. 
We adopt the following conventions: $\infty \cdot 0 = 0 \cdot \infty = 0$; $\infty + r = r + \infty = \infty$, $\infty \cdot r = \R_{>0}$ for every $r \in \R_{>0}$.
}
\begin{definition}\label{def:extended-metric}
    An \textbf{extended metric} on a set $X$ is a map $\metric \colon X\to \overline{\R}_+$ such that
    \begin{enumerate}[label=\alph*.,itemsep=0mm,topsep=0pt]
    \item {$\forall x, y \in X, \quad \metric(x, y) = 0$ iff $x = y$;} \hfill (Identity)
    \item
    \(
    \forall x, y, z \in X, \quad \metric(x, y) \leq \metric(x, z) + \metric(z, y);
    \) \hfill {(Triangle Inequality)}
    \item 
    \(
    \forall x, y \in X, \quad \metric(x, y) = \metric(y, x).
    \)
    \hfill {(Symmetry)}
    \end{enumerate}
\end{definition}

Similarly, an \emph{\textbf{extended norm}} is a map $\|\cdot\|\colon X \to \overline{\R}_+$ that satisfies the norm axioms. 
Moreover, we will consider maps $\hemimetric$ that do not satisfy the symmetry axiom. 
\citet{Lawvere2002} notes that symmetry is artificial and unnecessary for many of the main theorems involving metric spaces.
In such situations, the quantity $\hemimetric(x,y)$ can be interpreted as the \emph{cost} of going from $x$ to $y$. 
Occasionally, we want $\hemimetric$ to capture that it costs more to go from $x$ to $y$ than to return, making asymmetry desirable.
\begin{definition} \label{def:hemi-metric}
    A \textbf{hemi-metric}\footnote{A basic example of a hemi-metric space is the pair $(\R, d_\R)$, where $d_\R(x,y) = \max(x-y,0)$.} or Lawvere-metric on a set $X$ is a map $\hemimetric \colon X\to \overline{\R}_+$ such that\looseness=-1
    \begin{enumerate}[label=\alph*.,itemsep=0mm,topsep=0pt]
        \item $\hemimetric(x, x) = 0,\qquad$ 
        \item \(\hemimetric(x, z) \leq \hemimetric(x, y) + \hemimetric(y, z)\qquad\) for all $x, y, z \in X$.
    \end{enumerate}
\end{definition}
One of our main contributions is a formalization of measuring how far a language encoder $\enc$ is from the \emph{set} of all possible transformations of another encoder $\encg$---for example, from all affine transformations of $\encg$. 
For this, we \emph{lift} a hemi-metric over elements $x \in X$ to \emph{subsets} of $X$, a crucial for the rest of the paper.\looseness=-1
\begin{restatable}{definition}{hemiProps}\label{def:hemi-props}
    Let $(X, \hemimetric)$ be a hemi-metric space. For non-empty $E, E' \subset X$, we define
    \begin{equation}
    \hausdorffMap(E,E') \defeq \sup_{x \in E}\inf_{y\in E'}\hemimetric(x,y).
    \end{equation}
    The map $\hausdorffMap$ is called the \defn{Hausdorff--Hoare map} and is a hemi-metric on $\sP(X)\setminus{\{\emptyset\}}$, the power set of $X$. 
    When $E$ is a singleton set $\{x\}$, we will, with a slight abuse of notation, write $\hausdorffMap(x,E')$ to mean $ \hausdorffMap(\{x\},E')$, defined as $ = \inf_{y\in E'}\hemimetric(x,y)$.\footnote{Additional properties of $\hausdorffMap$ are discussed in \cref{lem:hemi-props}.}
\end{restatable}

We next introduce the \defn{hemi-metric recipe}. 
It tells us how one can define a hemi-metric on a set $X$ by \emph{embedding} $X$ into the power set of another space $Y$ where a hemi-metric already exists.
After $X$ is embedded, one can use the Hausdorff--Hoare map based on the hemi-metric from $Y$ to define a hemi-metric on $X$ through the images of $x \in X$. 
\begin{remark}[Hemi-Metric Recipe] \label{rem:recipeXY}
    Let $X$ be a set, $(Y, \hemimetric)$ a hemi-metric space, and $S \colon X \to \sP(Y)\setminus \{\emptyset\}, x \mapsto E_x$ a function that assigns an $x \in X$ a subset $E_x \in \sP(Y)\setminus \{\emptyset\}$. 
    Using \Cref{lem:hemi-props}, we can construct a hemi-metric on $X$ with
    \(\pointHausdorffMapS(x,y)\defeq \hausdorffMap(E_x,E_y) = \hausdorffMap(S(x),S(y)),\)
    and an \defn{extended pseudo-metric} (a symmetric hemi-metric) with
    \(\symHausdorffMapS(x,y)= \max(\pointHausdorffMapS(x,y),\pointHausdorffMapS(y,x)).\)
\end{remark}

\Cref{rem:recipeXY} introduces a general recipe for defining hemi-metric spaces on function spaces---topological spaces whose elements are functions from a set to subsets of an extended-metric space.  
This naturally applies to the study of encoders and their transformations, which we call \defn{$S$-homotopy}, i.e., two encoders are \defn{$S$-homotopic} if one can be $S$-deformed into the other.
In this case, the set $E_\enc$ for $\enc \in \encoders$ corresponds to the set of encoders that $\enc$ can be transformed into with mappings in $S$. 
We could, for example, take $S$ as the set of all continuous maps, smooth maps, or multi-layer perceptrons.
Our following discussion of affine maps, i.e., where $S = \AffV$, is extrinsically motivated but can be understood as a specific instance of the more general framework of $S$-homotopy.

\subsection{A Norm and a Distance on \texorpdfstring{$\sE_V$}{E}} \label{sec:norms-and-distances}
The hemi-metric recipe first requires us to define a (hemi-)metric on the individual elements.
Given that all norms on the $\R$-vector space $\Vspace$ are equivalent \citep[Proposition 2.2, \S XII]{lang2002algebra}, we fix in this paper a norm $|\cdot|_\Vspace$ on $\Vspace$.
We introduce the maps $\| \cdot \|_\infty\colon {\encoders} \to  \overline{\R}_+ $ and $\metric_\infty(\cdot,\cdot)\colon {\encoders}\times {\encoders} \to  \overline{\R}_+ $:
\begin{center}
\noindent\begin{minipage}{0.4\linewidth}
\vspace{1.5mm} 
\begin{equation}
    \| \enc \|_\infty \defeq \sup_{\str \in \kleene{\alphabet}}|\enc(\str)|_V 
\end{equation}
\end{minipage}
\qquad and \quad
\begin{minipage}{0.4\linewidth}
\vspace{-1.5mm}
\begin{equation}
\metric_\infty(\enc,\encg)=\|\enc-\encg\|_\infty,
\end{equation}
\end{minipage}
\end{center}
where $\| \cdot \|_\infty$ is an extended norm on ${\encoders}$ and $({\encoders},\metric_\infty)$ is a complete\footnote{A metric space is complete if every Cauchy sequence (a sequence where the distance between terms eventually becomes arbitrarily small) converges to a point within the space.} extended metric space.\footnote{This follows immediately from the fact that $|\cdot|_V$ is a norm and from the completeness of $V$.}   

Let $\GLV$ be the set of invertible $\featurespacedim \times \featurespacedim$ matrices.
We write $\| \cdot \|_\Vspace\colon \GLV \to \R_+$ for the subordinate matrix norm, i.e.,
\(
\| \mA \|_\Vspace=\sup_{\vv \in \Vspace\setminus\{0\}}\frac{|\mA \vv|_\Vspace}{|\vv|_\Vspace}.
\)
By abuse of language, we can view $\Vspace$ as an affine space\footnote{This amounts to ``forgetting'' the special role played by the zero vector.} and set $\AffV$ for the group of affine transformations of $\Vspace$. 
An affine transformation $\psi$ on $\Vspace$ is a map $\vv \mapsto \mA \vv + \vb$, for some invertible $\mA\in \GLV$ and $\vb \in \Vspace$. 
We call $\psi_{\text{lin}} \defeq \mA$ the linear part of $\psi$ and $t_\psi \colon \vv \mapsto \vv + \vb$ its translation part.
We denote with $\sT\subset \AffV$ the subgroup of translations.
Note that there is a natural left action of $\AffV$ on ${\encoders}$, i.e., 
$\AffV\times {\encoders} \to {\encoders}, \enc \mapsto \psi\circ\enc$.\footnote{A left action of a group \( G \) on a set \( X \) is a map \( \cdot \colon G \times X \to X \) such that \( e \cdot x = x \) for all \( x \in X \), where \( e \) is the identity element of \( G \), and \( g_1 \cdot (g_2 \cdot x) = (g_1g_2) \cdot x \) for all \( g_1, g_2 \in G \) and \( x \in X \).}

\subsection{Affine Alignment of Language Encoders}\label{section:affine-distances}
We now use the general recipe from \Cref{rem:recipeXY} for \emph{affine alignment} of language encoders---affinely mapping from one encoder to another.
For a subset $S \subset \AffV$ we can define 
\begin{subequations}\label{eq:def-distances}
    \begin{align}
        \onesidedmetricS(\enc,\encg) &\defeq \hausdorffMapInf (\enc,S(\encg)) = \inf_{\psi \in S} \|\enc- \psi \circ \encg\|_\infty \label{eq:one-sided-aff} \\
        \| \enc \|_S &\defeq \onesidedmetricS (0_{\encoders},\enc),
    \end{align}
\end{subequations}
where $S\left(\enc\right) \defeq \left\{s \circ \enc \mid s \in S\right\}$.
In the notation of the hemi-metric recipe from \Cref{rem:recipeXY}, we set $X = Y = \encoders$ (we align an encoder with another encoder) and $\hemimetric = \hemimetricInf$, the uniform convergence distance (cf. \cref{sec:norms-and-distances}).
Further, we take $S \subseteq \AffV \subset \Vspace^\Vspace$ and define $S\colon \encoders \to \sP\left(\encoders\right)$, $\enc \mapsto S\left(\enc\right) \defeq \left\{s \circ \enc \mid s \in S\right\}$.
In words, $\onesidedmetricS(\enc,\encg)$ captures the notion of how well the encoder $\encg$ can be $S$-transformed into $\enc$. 
This is commonly called the \defn{alignment} of $\encg$ with $\enc$.
$\onesidedmetricS(\enc,\encg)$ does not, however, necessarily tell us anything about how well $\enc$ can be $S$-transformed into $\encg$, resulting in asymmetry.\looseness=-1

\begin{restatable}{remark}{twoSided}\label{rem:two-sided}
$\onesidedmetricAff$ defined in \Cref{eq:one-sided-aff} is \emph{not} a metric on $\encoders$.\footnote{See \Cref{sec:proofs-affine-alignment} for a derivation.}
Further, when $S = \AffV$, the map $\inf_{\psi,\psi' \in \Aff(V)} \| \psi \circ \enc- \psi' \circ \encg\|_\infty$ is trivially zero by \Cref{trivial:affinedistance}.
\end{restatable}

In the case of affine isometries $\IsoV 
= \{\psi \in \AffV \mid \psi_{\text{lin}}\in \OrtV\}$ we show that the pair $(\encoders, \onesidedmetricIso)$ constitutes an extended pseudo-metric space.

\begin{restatable}{proposition}{isoMetricSpace}\label{prop:iso-metric-space}
The pair $({\encoders}, \onesidedmetricIso)$ is an extended pseudo-metric space.
\end{restatable}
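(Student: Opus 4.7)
The plan is to check the three pseudo-metric axioms for $\mathdj_{\IsoV}$: (i) $\mathdj_{\IsoV}(\vh, \vh) = 0$, (ii) symmetry $\mathdj_{\IsoV}(\vh, \vg) = \mathdj_{\IsoV}(\vg, \vh)$, and (iii) the triangle inequality. Only the separation axiom is allowed to fail, which is exactly why the result is a pseudo-metric rather than a genuine metric. Two structural facts about $\IsoV$ will drive the entire argument: (a) $\IsoV$ is a group under composition, hence closed under inversion; and (b) every $\psi \in \IsoV$ preserves distances on $\Vspace$---the translation part cancels in differences and the linear part lies in $\OrtV$---so $|\psi(\vu) - \psi(\vv)|_\Vspace = |\vu - \vv|_\Vspace$ for all $\vu, \vv \in \Vspace$.

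Reflexivity is immediate by taking $\psi = \id \in \IsoV$ in the infimum. For symmetry, I would apply $\psi^{-1}$ pointwise to rewrite $\vh(\str) - \psi(\vg(\str))$ as $\psi^{-1}(\vh(\str)) - \vg(\str)$ via (b); taking suprema over $\str \in \kleene{\alphabet}$ then gives $\|\vh - \psi \circ \vg\|_\infty = \|\psi^{-1} \circ \vh - \vg\|_\infty$. Reindexing the infimum by $\psi \mapsto \psi^{-1}$, which is a bijection of $\IsoV$ by (a), yields $\mathdj_{\IsoV}(\vh, \vg) = \mathdj_{\IsoV}(\vg, \vh)$.

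For the triangle inequality, given $\vh_1, \vh_2, \vh_3 \in \sE_\Vspace$ and $\varepsilon > 0$, I would select near-optimal $\psi_1, \psi_2 \in \IsoV$ with $\|\vh_1 - \psi_1 \circ \vh_2\|_\infty \le \mathdj_{\IsoV}(\vh_1,\vh_2) + \varepsilon$ and $\|\vh_2 - \psi_2 \circ \vh_3\|_\infty \le \mathdj_{\IsoV}(\vh_2,\vh_3) + \varepsilon$. Then $\psi_1 \circ \psi_2 \in \IsoV$ by (a), and the ordinary triangle inequality for $d_\infty$ together with (b)---applied to transfer $\psi_1$ across a difference without altering its $\|\cdot\|_\infty$-length---yields $\|\vh_1 - \psi_1 \circ \psi_2 \circ \vh_3\|_\infty \le \mathdj_{\IsoV}(\vh_1,\vh_2) + \mathdj_{\IsoV}(\vh_2,\vh_3) + 2\varepsilon$. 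Infimizing the left-hand side over $\IsoV$ and sending $\varepsilon \downarrow 0$ closes the argument; the cases in which some distance is $\infty$ are handled vacuously.

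The only subtle point I anticipate is carefully justifying invariance under $\psi_1$ inside the triangle-inequality step: writing $\psi_1(\vv) = \mA \vv + \vb$ with $\mA \in \OrtV$, one must verify that the translation $\vb$ cancels when subtracting $\psi_1(\vh_2(\str)) - \psi_1(\psi_2 \circ \vh_3(\str))$, leaving $\mA$ acting on the difference, which preserves $|\cdot|_\Vspace$ since $\mA$ is orthogonal. This is precisely the structural property that fails for general $\AffV$---where linear parts can stretch or shrink differences arbitrarily---and is exactly why restricting to affine isometries yields a bona fide (pseudo-)metric.
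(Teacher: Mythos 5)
Your proof is correct, but it takes a different route from the paper. You verify the pseudo-metric axioms directly: reflexivity via $\psi=\id$, symmetry by applying $\psi^{-1}$ pointwise and reindexing the infimum over the group, and the triangle inequality by composing $\varepsilon$-near-optimal isometries and using the fact that an isometry transfers across a difference without changing its $\|\cdot\|_\infty$-length. The paper instead uses the invariance property $\mathdj_{\IsoV}(\psi\circ\vh,\vg)=\mathdj_{\IsoV}(\vh,\vg)$ (its \Cref{isoinvariance}) to identify $\mathdj_{\IsoV}(\vh,\vg)$ with the \emph{symmetrized Hausdorff distance} $\drm_\infty^H(\Iso(\vh),\Iso(\vg))$ between the orbits of $\vh$ and $\vg$ under $\IsoV$, and then invokes the general fact (\Cref{lem:hausdorffdist}, resting on \Cref{lem:hemi-props}) that this Hausdorff map is an extended pseudo-metric on non-empty subsets. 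Both arguments hinge on the same two structural facts you isolate---$\IsoV$ is a group and its elements preserve $|\cdot|_\Vspace$-distances---but your version is self-contained and makes explicit exactly where inversion (symmetry) and composition (triangle inequality) enter, including the handling of infinite values; the paper's version is shorter, fits the general $S$-homotopy/Hausdorff-Hoare recipe of \Cref{rem:recipeXY}, and proves the conceptually stronger statement that the one-sided alignment measure for isometries already coincides with a two-sided orbit distance, from which the pseudo-metric property is inherited.
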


\section{Intrinsic Affine Homotopy}\label{sec:affine-intrinsic-equivalence}
The notion of affine alignment allows us to introduce \emph{homotopic relations} on $\encoders$.
We first derive the affine intrinsic preorder $\succsim_\Aff$ on the space of encoders.\footnote{All left-out proofs can be found in \cref{sec:proofs-affine-alignment}.}

\begin{restatable}{lemma}{specializationOrdering}
    Let $(X,\hemimetric)$ be a hemi-metric space. 
    The relation $(x \succsim_\hemimetric y$ iff $\hemimetric(x,y)=0)$ is a \defn{preorder}\footnote{A reflexive and transitive relation on $X$.} and it will be called the \defn{specialization ordering} of $\hemimetric$.
\end{restatable}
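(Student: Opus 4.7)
The plan is to verify the two defining properties of a preorder (reflexivity and transitivity) directly from the two hemi-metric axioms. The proof is essentially a one-liner per property, so the main content is organizing the bookkeeping.

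First, for reflexivity I would invoke axiom (a) of a hemi-metric, which states that $d(x,x) = 0$ for every $x \in X$. This is exactly the condition $x \succsim_d x$, so reflexivity is immediate.

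Next, for transitivity, I would assume $x \succsim_d y$ and $y \succsim_d z$, i.e., $d(x,y)=0$ and $d(y,z)=0$. Applying axiom (b), the triangle inequality, yields
\[
0 \leq d(x,z) \leq d(x,y) + d(y,z) = 0 + 0 = 0,
\]
where non-negativity comes from the codomain of $d$ being $\overline{\R}_+$. Hence $d(x,z) = 0$, i.e., $x \succsim_d z$, establishing transitivity.

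There is no real obstacle here: symmetry is \emph{not} required (and indeed will typically fail for a hemi-metric, which is why $\succsim_d$ is only a preorder and not an equivalence relation), so we never need to invoke it. I would close by noting that antisymmetry also fails in general since $d(x,y) = d(y,x) = 0$ need not imply $x=y$ unless $d$ is an extended metric; this is precisely why the relation yields a preorder rather than a partial order, and it foreshadows the passage to equivalence classes discussed subsequently in the paper.
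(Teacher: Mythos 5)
Your proof is correct and is exactly the standard argument: reflexivity from the axiom $d(x,x)=0$ and transitivity from the triangle inequality together with non-negativity of $d$. The paper itself does not spell this out but simply cites \citet[Proposition 6.1.8]{Goubault-Larrecq_2013}, which contains the same one-line verification, so there is no substantive difference in approach.
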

\begin{proof}
    \citet[Proposition 6.1.8]{Goubault-Larrecq_2013}.
\end{proof}

\begin{definition}[Intrinsic Affine Preorder] \label{def:intrinsic-order}
For two encoders $\enc,\encg \in {\encoders}$, we define the relation 
\begin{equation}
    \enc \succsim_\Aff \encg \quad \text{ iff } \quad \onesidedmetricAff(\enc, \encg)= 0.
\end{equation}
\end{definition}
\begin{restatable}{lemma}{affinePreorder}
    The relation $\succsim_\Aff $ is a preorder on ${\encoders}$. 
\end{restatable}
\begin{proof}
    Follows from $\onesidedmetricAff(\psi \circ \enc,\encg) \le \|\psi_{\text{lin}}\|_V \cdot  \onesidedmetricAff(\enc,\encg)$, see \Cref{sec:proofs-affine-equivalence}.
\end{proof}

Intuitively, $\succsim_\Aff$ captures the order of encoders such that higher-positioned encoders in the order can be $S$-transformed to the lower-positioned ones.
To derive the implications of $\succsim_\Aff$ we introduce the notion of an encoder rank.
\begin{definition}[Encoder Rank]
    For any $\enc \in {\encoders}$ let the \defn{encoder rank} be $\rank(\enc)\defeq \dim_\R(V_\enc )$, where $V_\enc$ is the subvector space generated by the image of $\enc$. 
    When $\rank(\enc)= \dim_\R(V)$, $\enc$ is a \defn{full rank} encoder, else it is \defn{rank deficient}.
\end{definition}


\begin{restatable}{theorem}{orthogonalProjection}\label{thm:orthogonalProjection}
    For $\enc,\encg \in {\encoders}$, we have
    \begin{equation}\label{affdjzero}
        \enc \succsim_\Aff \encg \Leftrightarrow \enc = \psi(  \pi_\enc \circ\encg) \text{ for some } \psi\in \AffV
    \end{equation}
    where, $\pi_\enc$ is the orthogonal projection of $V$ onto $V_\enc$. 
    In particular, if $\onesidedmetricAff(\enc, \encg)= 0$ then $\rank(\enc)\le \rank(\encg)$. 
    If in addtion, we know $\rank(\encg)= \rank(\enc)$, then $\encg$ must by an affine transformation  of $\encg$, i.e., $\enc=\psi \circ \encg$ for some $\psi\in \AffV$. 
\end{restatable}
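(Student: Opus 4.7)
The plan is to prove the biconditional directly, then deduce the two rank corollaries. The starting observation is that $\vh \succsim_{\Aff} \vg$ means $\inf_{\psi \in \AffV} \|\vh - \psi \circ \vg\|_\infty = 0$, so one direction is about approximation by invertible affine maps, and the other about extracting structure from an approximating sequence in $\AffV$.

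For the reverse direction, suppose $\vh = \psi \circ \pi_\vh \circ \vg$ for some $\psi \in \AffV$ with $\psi(\vv) = \mA \vv + \vb$. The subtlety is that the composition $\mA \pi_\vh$ is rank-deficient on $V$ whenever $V_\vh \subsetneq V$, so $\psi \circ \pi_\vh$ itself is not in $\AffV$. I would therefore perturb the projection and define $\psi_\epsilon(\vv) \defeq \mA\bigl(\pi_\vh + \epsilon\, \pi_{V_\vh^\perp}\bigr)\vv + \vb$. Relative to the orthogonal decomposition $V = V_\vh \oplus V_\vh^\perp$, the linear part of $\psi_\epsilon$ is $\mA$ composed with the block-diagonal map $\mathrm{diag}(I, \epsilon I)$, which is invertible for every $\epsilon > 0$, so $\psi_\epsilon \in \AffV$. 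A one-line computation gives $\vh(\str) - \psi_\epsilon(\vg(\str)) = -\epsilon\, \mA\, \pi_{V_\vh^\perp}\, \vg(\str)$, hence $\|\vh - \psi_\epsilon \circ \vg\|_\infty = \epsilon\, \|\mA\, \pi_{V_\vh^\perp}\, \vg\|_\infty$, and sending $\epsilon \downarrow 0$ yields $\mathdj_{\AffV}(\vh, \vg) = 0$.

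For the forward direction, I would pick a sequence $\psi_n(\vv) = \mA_n \vv + \vb_n$ in $\AffV$ with $\|\vh - \psi_n \circ \vg\|_\infty \to 0$. Fixing a reference string $\str_0$ and subtracting the $\str_0$-equation from the general one yields $\mA_n(\vg(\str) - \vg(\str_0)) \to \vh(\str) - \vh(\str_0)$ uniformly in $\str$. Consequently, on the linear span $W_\vg \defeq \langle\{\vg(\str) - \vg(\str_0) : \str \in \kleene{\alphabet}\}\rangle$, the map $L \colon W_\vg \to V$ given by $L(\vg(\str) - \vg(\str_0)) \defeq \vh(\str) - \vh(\str_0)$ is well-defined and linear, and its image lies in $W_\vh \subseteq V_\vh$. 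The main obstacle is then to exhibit a specific $\psi \in \AffV$ realizing $\vh = \psi \circ \pi_\vh \circ \vg$: I would build the linear part of $\psi$ to agree with $L$ on $\pi_\vh(W_\vg) \subseteq V_\vh$ (which is possible because $L$ factors through $\pi_\vh|_{W_\vg}$ by the containment $L(W_\vg) \subseteq V_\vh$), extend it freely and invertibly onto $V_\vh^\perp$, and fix the translation by $\vb \defeq \vh(\str_0) - \mA\, \pi_\vh\, \vg(\str_0)$; a direct substitution then verifies the factorization.

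Finally, the two rank claims follow as corollaries of the biconditional. Once $\vh = \psi(\pi_\vh \vg)$, one has $W_\vh = \mA\, \pi_\vh(W_\vg)$, so $\dim W_\vh \le \dim W_\vg$, and passing from $W$ to $V$ (which differ by at most one dimension, depending on whether the relevant affine hull contains the origin) yields $\rank(\vh) \le \rank(\vg)$. When the ranks coincide, $\pi_\vh|_{W_\vg}$ must be a linear isomorphism onto $W_\vh$, so the projection in $\psi \circ \pi_\vh$ can be absorbed into an invertible affine map of $V$, giving $\vh = \psi' \circ \vg$ for some $\psi' \in \AffV$. The trickiest part of the whole proof, in my view, will be the bookkeeping in the forward direction ensuring the extension of $L$ truly factors through $\pi_\vh$ while remaining invertible on $V$.
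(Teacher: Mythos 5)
Your reverse direction is essentially the paper's own argument: the paper likewise replaces $\pi_\vh$ by an invertible perturbation ($\pi_\vh \oplus \tfrac{1}{n}\pi_\vh^\perp$) and lets the perturbation vanish, and both versions share the tacit assumption that $\pi_\vh^\perp\circ\vg$ is bounded so that $\epsilon\,\|\mA\,\pi_\vh^\perp\,\vg\|_\infty\to 0$. Your route to the rank statements (reading them off the factorization, then passing from difference-spans $W$ to the spans $V_\vh,V_\vg$) also differs from the paper, which proves $\rank(\vh)\le\rank(\vg)$ directly by a rank-stability argument on a basis of $V_\vh$ drawn from the image of $\vh$; your version still owes the off-by-one translation/affine-hull care you yourself flag.

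The genuine gap is in the forward direction, exactly where you call it ``bookkeeping'': the parenthetical claim that $L$ factors through $\pi_\vh|_{W_\vg}$ ``because $L(W_\vg)\subseteq V_\vh$'' is a non sequitur. Factoring through the projection requires a kernel condition --- every $w\in W_\vg\cap V_\vh^\perp$ must satisfy $L(w)=0$ --- and knowing where the image lands gives no control over the kernel. Worse, this condition can genuinely fail under the hypothesis $\mathdj_{\AffV}(\vh,\vg)=0$: take $V=\R^2$, let $\vg$ take the values $(0,0)$, $(1,0)$, $(0,1)$ (on the empty string, on $a$, and on all longer strings, say) and $\vh$ correspondingly $(0,0)$, $(1,0)$, $(1,0)$. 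The invertible maps $\psi_n$ with zero translation and linear part sending $e_1\mapsto e_1$, $e_2\mapsto e_1+\frac{1}{n}e_2$ give $\|\vh-\psi_n\circ\vg\|_\infty=\frac{1}{n}\to 0$, so $\vh\succsim_\Aff\vg$; yet $e_2\in W_\vg\cap V_\vh^\perp$ while $L(e_2)=(1,0)\neq 0$, and indeed no map $\psi$ of any kind can satisfy $\vh=\psi(\pi_\vh\circ\vg)$, since $\pi_\vh\circ\vg$ identifies two strings on which $\vh$ differs. So the step cannot be closed by more careful bookkeeping; it is the mathematical crux, and it even puts pressure on the statement itself. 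For comparison, the paper never constructs your $L$: it first notes $\pi_\vh\circ\phi_n\circ\vg\to\vh$ and then conjugates $\phi_n$ by the invertible near-projections $\pi_{\vh,n}=\pi_\vh\oplus\frac{1}{n\|\phi_n\|}\pi_\vh^\perp$ to get $\mathdj_{\AffV}(\vh,\pi_\vh\circ\vg)=0$, reducing to the equal-rank case on $V_\vh$ --- but that step must control precisely the cross term $\pi_\vh\,(\phi_n)_{\mathrm{lin}}\,\pi_\vh^\perp\,\vg$ that the example above makes nonvanishing. Any correct completion of your argument therefore needs an additional hypothesis or an additional argument controlling how the approximating maps act on $W_\vg\cap V_\vh^\perp$, not just tidier extension-and-invertibility bookkeeping.
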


This allows us to state our first notion of language encoder similarity: intrinsic affine homotopy.
\begin{definition}[Exact Intrinsic Affine Homotopy]\label{def:intrinsic-homotopy}
We say that two encoders $\enc, \encg \in {\encoders}$ are \defn{exactly intrinsically affinely homotopic} and write $\enc \simeq_{\Aff}\encg$ if
\begin{equation}
\begin{aligned}\label{eq:affine-equivalence}
\onesidedmetricAff(\enc, \encg)= 0 \text{ and }\rank(\enc)=\rank(\encg).
\end{aligned}
\end{equation}
\end{definition}
For any $\enc,\encg \in {\encoders}$, one can easily show that 
\begin{equation}
    \enc \simeq_{\Aff} \encg \Longleftrightarrow \left(\encg \succsim_\Aff \enc \textit{ and } \enc \succsim_\Aff \encg \right) \Longleftrightarrow \symHausdorffMapAff(\enc,\encg)=0,
\end{equation}
which implies that $\simeq_\Aff$ is an equivalence relation on the set of language encoders ${\encoders}$.
Intuitively, two encoders $\enc$ and $\encg$ are exactly intrinsically affinely homotopic, this means that both $\encg$ can be affinely mapped to $\enc$, as well as the other way around.

\section{Extrinsic Homotopy}\label{sec:extrinsic-equivalence}

In \Cref{sec:affine-intrinsic-equivalence}, we explore methods for assessing how similar two language encoders are without reference to any downstream tasks. 
Here, we extend our discussion to the \emph{extrinsic} homotopy of language encoders. 
Since language encoders are primarily used to generate representations for downstream tasks---such as in transfer learning, illustrated by the sentiment analysis example in \Cref{sec:intro}---we argue that the key criterion in the similarity of two encoders lies in how closely we can align predictions stemming from their representations.\footnote{The proofs of all claims in this section can be found in \cref{sec:proofs-affine-equivalence}.}
\begin{principle}[Extrinsic Homotopy]\label{principle:extrinsic-equivalence}
    Two language encoders $\enc$ and $\encg$ are \defn{extrinsically homotopic} if we can guarantee a similar performance on any downstream task $\enc$ and $\encg$ might be used for.
\end{principle}
The rest of the section formalizes this intuitive notion and describes its relationship with intrinsic affine homotopy.
Let $W$ be the vector space $\R^N$ and set $\Aff(\Vspace,W)$ as the set of affine maps from $\Vspace$ to $W$.\footnote{Given an affine map $f \colon \Vspace \to W$, there is a unique linear map $\mA=f_{\text{lin}}\in \mathcal{L}(V,W)$ and $\vb \in W$ such that
for every $v \in V$ we have
\(f(\vv)=  \mA\cdot \vv+ \vb\).} 
We define $\sE_\Delta \defeq \text{Map}(\kleene{\alphabet},\Delta^{N - 1})$ and $\sE_{W}=\text{Map}(\kleene{\alphabet},W)$. 
Lastly, we formalize the notion of a transfer learning task as constructing a classifier that uses a language encoder's string representations. 
Particularly, we set $\Vset_N$ to be the family of log-linear models as follows
\begin{equation}\label{eq:vset}
    \Vset_N \colon {\encoders} \to \sP(\sE_{\Delta^{N - 1}})\setminus \{\emptyset\}, \quad \enc \mapsto \softmax_\lambda(\Aff_{V,W}(\enc)),
\end{equation}
where $\Aff_{V,W}$ is the map
\begin{align}
\Aff_{V,W}\colon {\encoders} \to \sP(\sE_W)\setminus \{\emptyset\}, \quad \enc \mapsto \{\psi \circ \enc \mid \psi \in \Aff(V,W)\}
\end{align}
and $\softmax_\lambda\colon \R^N \to \Delta^{N - 1}$ is defined for $\lambda \in \R_+$, $\vx \in \R^N$, and $n \in \NTo{N}$ as
\begin{equation}
    \softmax_\lambda\left(\vx\right)_n = \frac{\exp\left(\lambda \evx_n\right)}{\sum_{n' = 1}^N \exp\left(\lambda\evx_{n'}\right)}.
\end{equation}
\begin{remark}
    Each $p_\psi=\softmax_\lambda\circ \psi(\enc(\str))$ can be seen as a ``probability distribution'' over $[N]$
\begin{equation} 
 \begin{aligned}
\Vset_N(\enc) = \{    p(\square \mid \str) \colon {
 [N]\to [0,1], 
    \square \mapsto  \softmax_\lambda\circ\psi(\enc(\str))_{\square}
} \mid \psi \in \Aff(V,W) \}.
 \end{aligned}
\end{equation}
\end{remark}

Through our standard recipe from \Cref{rem:recipeXY}, we can define the following hemi-metrics on $\encoders$.
\begin{definition} \label{def:extr-ds}
For any two encoders $\enc,\encg \in {\encoders}$, we define\footnote{The subscript $\infty$ in $d_{\infty,\Delta}$ and $d_{\infty, W}$ is used to insist on that we are considering the supremum distance $d_\infty$ in $\Delta^{N-1}$ and $W$, respectively.}
\begin{subequations}
\begin{align}
    \pointHausdorffMapVW(\enc, \encg)&\defeq \hausdorffMapInftyW(\Aff_{V,W}(\enc), \Aff_{V,W}(\encg)) \\
    \pointHausdorffMapVSet(\enc, \encg)&\defeq \hausdorffMapInftyDelta(\Vset_N(\enc), \Vset_N(\encg)) \label{eq:extrinsic-measure}
\end{align}
\end{subequations}
\end{definition}
Notice that we use $\pointHausdorffMap$ rather than $\onesidedmetric$ in \cref{def:extr-ds} since we are interested in how closely we can bring $\enc$ and $\encg$ when we affinely transform \emph{both} of them---this corresponds to independently affinely transforming the encoders for the same transfer learning task.
In particular, \Cref{eq:extrinsic-measure} measures how different two encoders are on any transfer learning task, formalizing the notion of extrinsic homotopy (cf. \Cref{principle:extrinsic-equivalence}), captured by the following definition.
\begin{definition}[Extrinsic Affine Preorder]
    An encoder $\enc \in \encoders$ is \textbf{exactly extrinsically homotopic} to\footnote{Exact extrinsic homotopy is asymmetric.} $\encg \in \encoders$ if 
\(
\pointHausdorffMapVSet(\enc, \encg)
=0
\).
\end{definition}Analogously to \cref{def:intrinsic-order}, we use $\pointHausdorffMapVSet(\enc, \encg)$ to define a preorder.
\begin{definition}[Extrinsic Affine Preorder]
For two encoders $\enc,\encg \in {\encoders}$, we define the relation 
\begin{equation}
    \enc \succsim_{\Ext} \encg \quad \text{ iff } \quad \pointHausdorffMapVSet(\enc, \encg) = 0.
\end{equation}
\end{definition}
\begin{lemma}
The relation $\enc \succsim_{\Ext} \encg$ is a preorder on $\encoders$. 
\end{lemma}
We now relate $\pointHausdorffMapVW(\enc, \encg)$ and $\pointHausdorffMapVSet(\enc, \encg)$ from \cref{def:extr-ds}, and $\onesidedmetricAff(\enc, \encg)$ from \cref{sec:affine-intrinsic-equivalence}.
\begin{restatable}{lemma}{IntrinsicExtrinsiclemma}\label{lem:intrinsic-extrinsic}
Let $\enc,\encg \in \sE_{V}$. 
We have
\begin{itemize}
    \item[1.] There exists a constant $c(\lambda)>0$ such that for any $\psi \in \Aff(V,W)$
    \[
    \preHausdorffMapDeltaInf(\softmax_\lambda(\psi \circ \enc), \Vset_N(\encg))
    \le c(\lambda)  \|\psi_{\text{lin}}\| \onesidedmetricAff(\enc, \encg).
    \]
    \item[2.] $\pointHausdorffMapVSet(\enc, \encg) \le c(\lambda) \pointHausdorffMapVW(\enc, \encg)$.
    \item[3.] 
    \(\onesidedmetricAff(\enc, \encg) =0\Rightarrow \pointHausdorffMapVW(\enc, \encg)=0 \Rightarrow \pointHausdorffMapVSet(\enc, \encg)=0 .\)
\end{itemize}
\end{restatable}

\cref{lem:intrinsic-extrinsic} shows that $\succsim_{\Ext}$ is \emph{finer} than $ \succsim_\Aff$. 
This means that the affine intrinsic preorder is contained in the extrinsic preorder, i.e., $\enc \succsim_\Aff \encg \Rightarrow \enc \succsim_{\Ext} \encg$.
Lastly, we can show that $\pointHausdorffMapVSet(\enc, \encg)$ is upper bounded by the intrinsic hemi-metric $\pointHausdorffMapAff$.
\begin{restatable}[$\epsilon$-Intrinsic $\Rightarrow$ $\mathcal{O}(\epsilon)$-Extrinsic]{theorem}{intrinsicExtrinsic}\label{thm:intrinsic-extrinsic}
Let $\enc,\encg \in \encoders$ be two encoders. Then,
\[\pointHausdorffMapVSet(\enc, \encg) \le c(\lambda)\, \pointHausdorffMapAff(\enc, \encg).\]
\end{restatable}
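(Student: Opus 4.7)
The strategy is to reduce the extrinsic Hausdorff-Hoare distance to the intrinsic one via a composition-of-affine-maps argument, combined with the global Lipschitz property of $\softmax_\lambda$ already invoked in \Cref{lem:intrinsic-extrinsic}.

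First, I would invoke Part 2 of \Cref{lem:intrinsic-extrinsic}, which already provides
\[
d_{\Vset(V,\Delta)}(\vh, \vg) \le c(\lambda)\, d_{\Aff(V,W)}(\vh, \vg).
\]
This reduces the theorem to comparing the two Hausdorff-Hoare distances $d_{\Aff(V,W)}(\vh, \vg)$ and $d_{\AffV}^{\sH}(\vh, \vg)$, which differ only by the space over which the outer and inner quantifiers range.

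Second, I would unfold both as sup-inf of $d_\infty$ and use a composition trick: for any fixed $\psi \in \Aff(V,W)$ on the extrinsic side, and any $\phi \in \AffV$, the map $\psi \circ \phi$ is a valid competitor in the inner $\inf$ over $\Aff(V,W)$, since $\Aff(V,W)$ is closed under pre-composition by elements of $\AffV$. A direct computation using the affine decomposition $\psi(\vv) = \psi_{\text{lin}} \vv + \vb$ gives
\[
\|\psi\vh - \psi\circ\phi\, \vg\|_\infty \;=\; \|\psi_{\text{lin}}(\vh - \phi \vg)\|_\infty,
\]
so the inner $\inf$ over $\psi' \in \Aff(V,W)$ is controlled by the inner $\inf$ over $\phi \in \AffV$ that defines $d_{\AffV}^{\sH}$. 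Taking the outer sup over $\psi$ and absorbing constants through the softmax Lipschitz factor $c(\lambda)$ then lifts this pointwise bound to the claimed global inequality.

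The main obstacle will be controlling the factor $\|\psi_{\text{lin}}\|$ that appears in the computation above, which is a priori unbounded as $\psi$ ranges over $\Aff(V,W)$. The resolution is that the exact same scaling degree of freedom is already present on the intrinsic side: for any $\phi \in \AffV$ one has $\mathdj_{\AffV}(\phi\vh, \vg) \le \|\phi_{\text{lin}}\|\, \mathdj_{\AffV}(\vh, \vg)$ by the same composition trick, so the outer supremum in $d_{\AffV}^{\sH}$ scales consistently with the outer supremum in $d_{\Aff(V,W)}$ under common affine rescalings. At the zero endpoint, $d_{\AffV}^{\sH}(\vh, \vg) = 0$ implies $\mathdj_{\AffV}(\vh, \vg)=0$ and Part 3 of \Cref{lem:intrinsic-extrinsic} forces $d_{\Vset(V,\Delta)}(\vh, \vg) = 0$, pinning down the inequality at zero; monotonicity of the Hausdorff-Hoare construction under the $c(\lambda)$-Lipschitz map $\softmax_\lambda$ then extends the bound linearly to the general case.
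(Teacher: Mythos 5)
Your opening reduction is fine: Part 2 of \Cref{lem:intrinsic-extrinsic} does reduce the theorem to comparing $d_{\Aff(V,W)}(\vh,\vg)$ with $d_{\AffV}^{\sH}(\vh,\vg)$, and the cancellation $\|\psi\circ\vh-\psi\circ\phi\circ\vg\|_\infty\le\|\psi_{\text{lin}}\|\,\|\vh-\phi\circ\vg\|_\infty$ is correct. The gap is exactly at the obstacle you name and is not closed by your proposed resolution. After the outer supremum over $\psi\in\Aff(V,W)$, your bound is $\sup_{\psi}\|\psi_{\text{lin}}\|\cdot\mathdj_{\AffV}(\vh,\vg)$, which is infinite whenever $\mathdj_{\AffV}(\vh,\vg)>0$. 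The claim that the two outer suprema ``scale consistently under common affine rescalings'' is not an argument: nothing identifies the $\psi$ ranging over $\Aff(V,W)$ with the $\psi_V$ ranging over $\AffV$ inside $d_{\AffV}^{\sH}$ in a way that cancels the factor $\|\psi_{\text{lin}}\|$. The final step is also invalid: knowing $d_{\AffV}^{\sH}(\vh,\vg)=0\Rightarrow d_{\Vset(V,\Delta)}(\vh,\vg)=0$ gives no control when $d_{\AffV}^{\sH}(\vh,\vg)>0$; neither side carries a homogeneity or linear structure along which a bound established ``at the zero endpoint'' could be ``extended linearly,'' and Lipschitzness of $\softmax_\lambda$ does not supply one.

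The missing idea (and the paper's actual proof) is a normalization that relocates the unbounded part of $\psi$ into precisely the supremum that the right-hand side already contains. Factor each $\psi\in\Aff(V,W)$ as $\psi=A\circ\phi_V$ with $A$ linear of norm $\|A\|=1$ and $\phi_V\in\AffV$ (the translation being absorbed into the competitor in $W$). Using competitors of the form $A\circ\phi'$ in the inner infimum gives, for each fixed $\psi$,
\[
d_{\infty,W}\bigl(\psi\circ\vh,\ \Aff_{V,W}(\vg)\bigr)\ \le\ \|A\|\,\mathdj_{\AffV}(\phi_V\circ\vh,\vg)\ =\ \mathdj_{\AffV}(\phi_V\circ\vh,\vg)\ \le\ \sup_{\psi_V\in\AffV}\mathdj_{\AffV}(\psi_V\circ\vh,\vg)\ =\ d_{\AffV}^{\sH}(\vh,\vg),
\]
and composing with the $c(\lambda)$-Lipschitz $\softmax_\lambda$ (Part 1 of \Cref{lem:intrinsic-extrinsic}) before taking the supremum over $\psi$ yields the theorem. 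In other words, the factor $\|\psi_{\text{lin}}\|$ is not cancelled against anything; it is converted, at unit operator norm, into an element $\phi_V\circ\vh$ of the $\AffV$-orbit of $\vh$, which is exactly the set over which the Hausdorff--Hoare distance $d_{\AffV}^{\sH}(\vh,\vg)$ takes its supremum. Your argument never exploits the fact that the bound is against $d_{\AffV}^{\sH}$ rather than against $\mathdj_{\AffV}$; making that step explicit via the unit-norm factorization is what completes the proof.
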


\section{Linear Alignment Methods for Finite Representation Sets}\label{sec:algorithms}

\cref{sec:affine-intrinsic-equivalence,sec:extrinsic-equivalence} introduce ways of comparing language encoders as functions, which holistically characterizes relationships between them.
We now address a more practical concern: Given two language encoders $\enc$ and $\encg$, how can we approximate their similarity in practice?
Rather than comparing $\encFun{\str}\colon \kleene{\alphabet} \to \Rd$ with $\encgFun{\str}\colon \kleene{\alphabet} \to \Rd$ over the entire $\kleene{\alphabet}$,\footnote{For simplicity, we assume that $\enc$ and $\encg$ both map to $\Rd$} we compare them over a finite set of strings $\sY=\{\str^{(n)}\}_{n=1}^\nsamples$.
We combine $\sY$'s representations given by $\enc$ and $\encg$ into matrices $\mH, \mG \in \R^{\nsamples \times \featurespacedim}$, where we denote $\mH_{\str, \cdot} = \encFun{\str}$ and $\mG_{\str, \cdot} = \encgFun{\str}$. 
We can approximate the notions of similarity from \Cref{sec:affine-distances} by optimizing over the affine maps $\AffV$ (for example, using gradient descent).
Particularly, we approximate intrinsic similarity as
\begin{equation}\label{eq:intrinsic-finite}
    \onesidedmetricAffApprox(\mH, \mG) \defeq \inf_{\psi \in \AffV} \max_{\str\in \sY} \|\mH_{\str, \cdot} - \psi \circ \mG_{\str, \cdot}\|_V,
\end{equation}
and extrinsic similarity for some task-specific fixed $\psi'$ as
\begin{equation}\label{eq:extrinsic-finite}
    \onesidedmetricPsiPrime(\mH, \mG) \defeq \inf_{\psi \in \AffVW} \max_{\str\in \sY} \|\softmax(\psi' \circ \mH_{\str, \cdot}) - \softmax(\psi \circ \mG_{\str, \cdot})\|_W.
\end{equation}
Unfortunately, the $\max$ over $\sY$ makes the optimization in \cref{eq:intrinsic-finite,eq:extrinsic-finite} difficult.
For simplicity, we turn to commonly used linear alignment methods, which we review for completeness.

\paragraph{Orthogonal Procrustes Problem. } 
Rather than optimizing the infinity norm over $\sY$ as \cref{eq:intrinsic-finite,eq:extrinsic-finite}, the orthogonal Procrustes problem finds the orthogonal transformation minimizing the Frobenius norm \citep{schonemann1966procrustes} by solving $\argmin_{\mA\in \OrtV} \| \mH - \mA\mG\|_F$.
Given the singular-value decomposition $\mH^\top\mG = \mU\mSigma\mV^\top$, the optimum is achieved by $\mU\mV^\top$.\footnote{See \Cref{sec:proofs-algorithms} for the derivation.}
Since the $\argmin$ is over $\OrtV$, this defines an extended pseudo-metric space by \Cref{prop:iso-metric-space}.

\paragraph{Canonical Correlation Analysis (CCA). }
CCA \citep{Hardoon2004CanonicalCA} is a linear alignment method that finds the matrices $\mA, \mB$ that project $\mH$ and $\mG$ into subspaces maximizing their canonical correlation. Let $\mA_{\cdot, j}$ and $\mB_{\cdot, j}$ be the $j$th column vectors of $\mA$ and $\mB$, respectively. The formulation is as follows
\begin{align}\label{eq:cca} 
    \rho_j = \sup_{\mA_{\cdot, j}, \mB_{\cdot, j}}  \mathrm{corr}   (  \mH\mA_{\cdot, j}, \mG\mB_{\cdot, j}) \quad
 \text{s.t. } \ ~~ \forall_{i< j}~~ \mH\mA_{\cdot, j}\perp \mH \mA_{\cdot, i}
 ~,\ ~~ \forall_{i< j}~~ \mG\mB_{\cdot, j}\perp \mG \mB_{\cdot, i}.
\end{align}
The representation similarity is measured in terms of the goodness of CCA fit, e.g., the mean squared CCA correlation $R^2_\text{CCA} = \sum_{i=1}^{\featurespacedim} \rho_i^2 / \featurespacedim$.
We can reformulate the CCA objective in \Cref{eq:cca} as
\begin{align}\label{eq:cca-reform}
    \inf_{\mA, \mB} \frac{1}{2} \|\mA^\top\mH - \mB^\top\mG\|_F^2 \qquad \text{s.t.} \quad (\mA^\top\mH)(\mA^\top\mH)^\top = (\mB^\top\mG)(\mB^\top\mG)^\top = \mI.
\end{align}
Given the singular-value decomposition $\mH^\top\mG = \mU\mSigma\mV^\top$, the solution of \Cref{eq:cca-reform} is $(\hat\mA, \hat\mB) = ((\mH\mH^\top)^{-\frac{1}{2}}\mU, (\mG\mG^\top)^{-\frac{1}{2}}\mV)$, where $(\mH\mH^\top)^{-\frac{1}{2}}$ and $(\mG\mG^\top)^{-\frac{1}{2}}$ are whitening transforms of $\mU$ and $\mV$. 
Assuming the data is whitened during pre-processing, CCA corresponds to linear alignment under an orthogonality constraint, equivalent to the orthogonal Procrustes problem; see also \Cref{sec:proofs-algorithms}.

\paragraph{CCA Extensions.} 
Projection-weighted CCA (PWCCA) \citep{morcos2018insightsor} also finds alignment matrices with CCA but applies weighting to correlation values $\rho_i$ to report the goodness of fit. 
Given the canonical vectors $\hat\mA$, PWCCA reports $\bar\rho_{\text{PW}} = \sum_{i=1}^{\featurespacedim} \alpha_i \rho_i / \sum_i \alpha_i$, where $\alpha_i = \sum_j | \langle \hat{\mA}_{\cdot, i}, \mH_{\cdot, j}\rangle|$.\footnote{CCA extensions beyond PWCCA are dicussed in \Cref{sec:related-work}.}

\paragraph{Non-Alignment Methods. } 
While not explicitly (linearly) aligning representations, CKA \citep{pmlr-v97-kornblith19a} evaluates the kernel similarity between representations. CKA computes the normalized Hilbert-Schmidt independence \citep{gretton2005hilbert} between centered kernel matrices $\mK^\mH$ and $\mK^\mG$ where $\mK^{\mH}_{ij} = k(\mH_{i, \cdot}, \mH_{j, \cdot})$, and $\mK^{\mG}_{ij} = k(\mG_{i, \cdot}, \mG_{j, \cdot})$ for a kernel function $k$, i.e., $\mathrm{tr}(\mK^\mH \mK^\mG)/\sqrt{(\mathrm{tr}(\mK^\mH\top\mK^\mH) \mathrm{tr}(\mK^\mG\top\mK^\mG) )}.$
Linear CKA, where $k(\mH_{i, \cdot}, \mH_{j, \cdot}) = \mH_{i, \cdot}^\top \mH_{j, \cdot}$, is commonly used.

\begin{figure}[t!]
  \centering
  \includegraphics[width=0.95\textwidth]{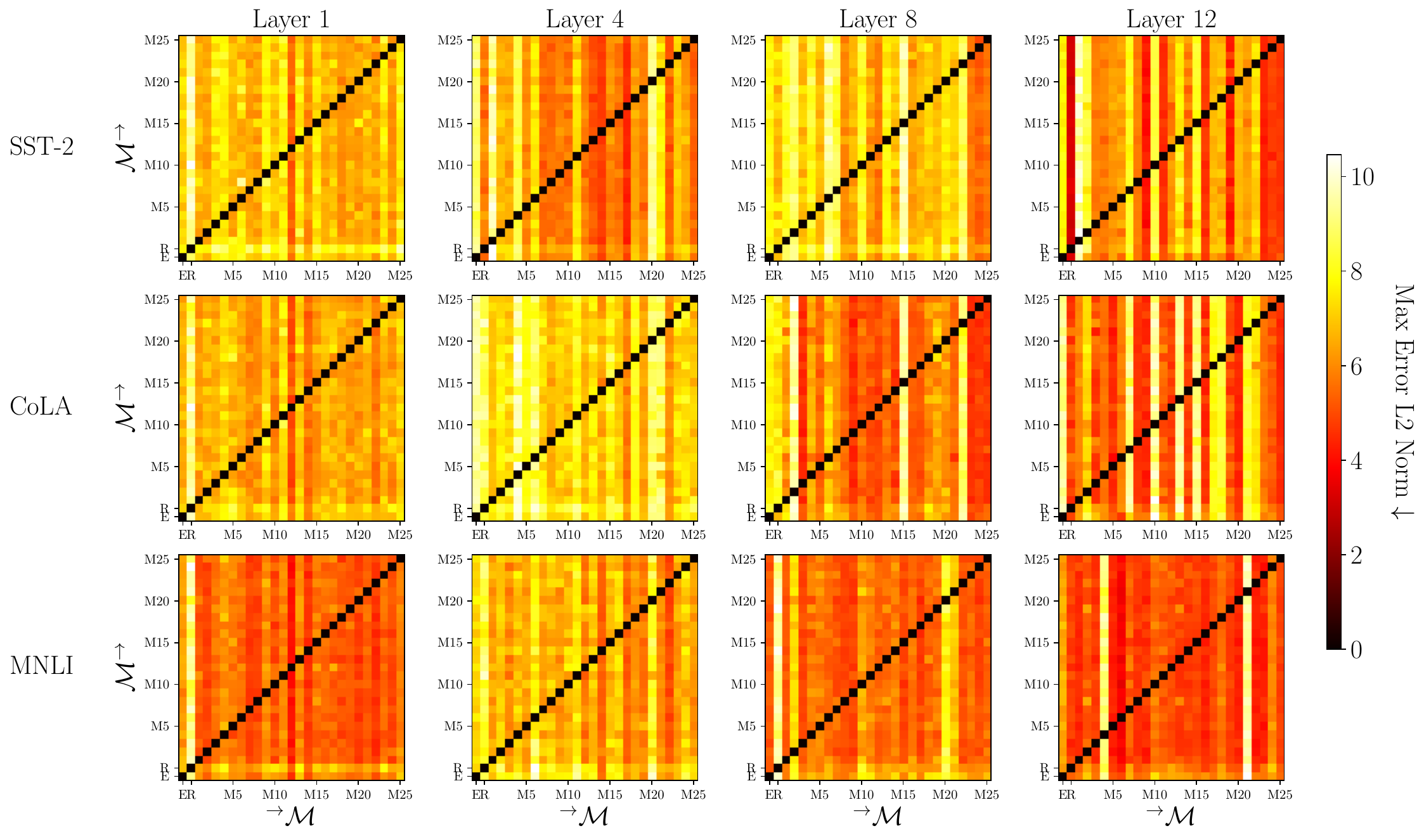}
  \caption{Asymmetry between ELECTRA (E), RoBERTa (R), and \textsc{MultiBERT} encoders (M1-M25) across layers. For each pair of the encoders $\mathcal{M}^{(i)}$ and $\mathcal{M}^{(j)}$, we generate training set embeddings $\mH^{(i)}, \mH^{(j)} \in \R^{\nsamples \times \featurespacedim}$ for SST-2, COLA, and MNLI. We then fit $\mH^{(i)}$ to $\mH^{(j)}$ with an affine map and report the goodness of fit through the max error L2 norm, i.e., an approximation of $\onesidedmetric (\mH^{(j)}, \mH^{(i)})$ on row $i$ and column $j$ of the grid. Full results across GLUE tasks are shown in \autoref{fig:encoder-asymmetry-full}. 
  }
  \label{fig:encoder-asymmetry}
\end{figure}

\section{Experiments}
\label{sec:experiments}
We now explore the practical implications of our theoretical results.
We conduct experiments on ELECTRA \cite{clark2020electra}, \textsc{RoBERTa} \cite{liu2019robertaar}, and the 25 \textsc{MultiBERT} \citep{sellam2021multiberts} encoders, which are architecturally identical to \textsc{Bert-Base} \citep{devlin2019bert} models pre-trained with different seeds.
We report results on the training sets of two GLUE benchmark classification tasks: SST-2 \citep{socher2013recursive} and MRPC \citep{dolan2005automatically}.
When reporting $\onesidedmetric$ and $\onesidedmetricPsiPrime$ from \Cref{eq:intrinsic-finite} and \Cref{eq:extrinsic-finite}, we use the $L_2$ norm for simplicity and approximate $\pointHausdorffMapVSet$ as 
\begin{equation}\label{eq:supinfsup-extrinsic-finite}
    \pointHausdorffMapApproxVSet (\mH, \mG) = \sup_{\psi' \in \AffVW} \inf_{\psi \in \AffVW} \max_{\str \in \sY} \|\softmax(\psi' \circ \mH_{\str, \cdot)} - \softmax(\psi \circ \mG_{\str, \cdot})\|_2.
\end{equation}
The experimental setup and compute resources are further described in \Cref{sec:experimental-setup}. 

\paragraph{The Intrinsic `Preorder' of Encoders. }\label{sec:ex-asymmetry}
We first investigate whether the asymmetry of $\onesidedmetric_\AffV$ is measurable in the finite alphabet encoder representations.
\autoref{fig:encoder-asymmetry} shows distinct vertical lines for both tasks indicating that there are encoders that are consistently easier to affinely map \emph{to} $({}^\rightarrow\mathcal{M})$. This seems to be rather independent of which encoder we map \emph{from} $(\mathcal{M}^\rightarrow)$.
We further see that this trend is task-independent for early layers but diverges for later layers.

\paragraph{The Influence of Encoder Rank Deficiency. }
As discussed in \Cref{sec:affine-intrinsic-equivalence}, the encoder rank plays a pivotal role in affine mappability; exact affine homotopy is only achievable between equal-rank encoders.\footnote{We provide additional experiments on the role of the encoder rank in \Cref{sec:additional-experiments}.}
With this in mind, we return to our findings from \autoref{fig:encoder-asymmetry} to evaluate whether the observed differences between encoders can be attributed to a difference in measurable rank.
Due to the inaccuracies of computing the rank numerically, we approximate the encoder rank using the \defn{rank to precision $\epsilon$} as the number of representation matrix singular values larger than some $\epsilon \in \R$.\footnote{Following \citet{press2007numerical}, we choose $\epsilon$ as $n\cdot\sigma_1 \cdot \epsilon_p \cdot \max(\nsamples, \featurespacedim)$ for $n=5$, where $\sigma_1$ is the largest singular value of the $\nsamples \times \featurespacedim$ representation matrix and $\epsilon_p$ the float machine epsilon. We note that the rank to precision $\epsilon$ and the recovered correlation may depend on the chosen $\epsilon$.}
We find statistically significant ($p$-value < 0.05) rank correlation with the median intrinsic distance $\onesidedmetricAffApprox$ when mapping \emph{to} the corresponding encoder for RTE ($\rho = 0.312$), MRPC ($\rho = 0.609$), and QQP ($\rho = 0.389$).
We find no statistically significant correlations with the median distance when mapping \emph{from} the corresponding encoder.
This difference in encoder ranks could, therefore, partially explain the previously observed differences in affine mappability as some encoders seem to learn lower-rank representations.\looseness=-1

\begin{figure}[t!]
  \centering
  \includegraphics[width=\textwidth]{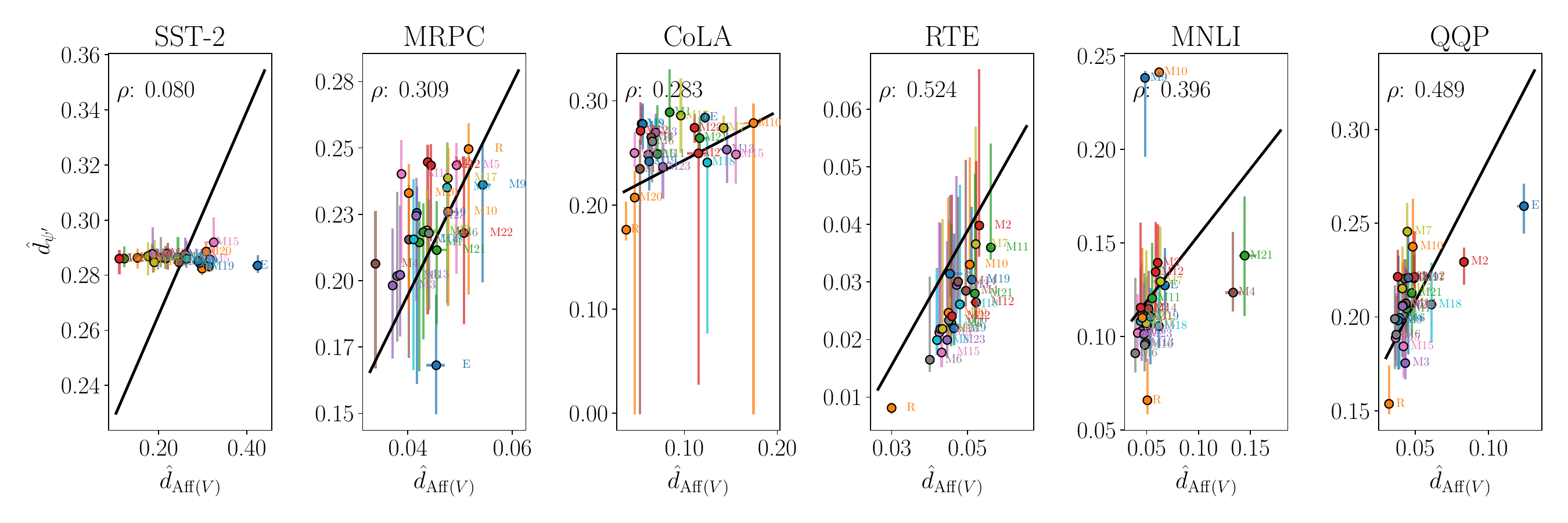}
  \caption{
  For ELECTRA (E), RoBERTa (R), and \textsc{MultiBERT}s (M1-M25), we plot extrinsic ($ \onesidedmetricPsiPrime$) against intrinsic similarity ($\onesidedmetricAffApprox$) across GLUE tasks. We group the points by how well we can map to each encoder (${}^\rightarrow\mathcal{M}$), and display the median, as well as the first and third quartiles as vertical and horizontal lines.
  We additionally show the linear regression from $\onesidedmetricAffApprox$ to $\onesidedmetricPsiPrime$. 
  }
  \label{fig:intrinsic-extrinsic}
\end{figure}

\paragraph{A Linear Bound on Extrinsic Similarity. } 
\Cref{lem:intrinsic-extrinsic} derives a relationship between affine intrinsic and extrinsic similarity.
To evaluate its strength in practice, we measure Spearman's Rank Correlation ($\rho$) and Pearson Correlation (PCC) between intrinsic measures introduced in \Cref{sec:algorithms} and the extrinsic measures $\onesidedmetricPsiPrime$ and $\pointHausdorffMapApproxVSet$.
PCC measures the strength and direction of a linear relationship between two random variables, whereas Spearman's $\rho$ additionally evaluates the variables' monotonic association.
$\onesidedmetricPsiPrime$ is computed by training a linear classifier $ \psi'\in \AffV$ on the final \textsc{MultiBERT} layer for each task. 
Further, we report $\pointHausdorffMapApproxVSet$ as the maximum $L_2$ loss for a large number of randomly generated\footnote{The generation process is described in \Cref{sec:experimental-setup}.} classifiers $\psi'$ on the final layer of each \textsc{MultiBERT} encoder. We generate $100$ such classifiers for a range of GLUE datasets.\footnote{The computational expense of computing $\pointHausdorffMapApproxVSet$ restricts this analysis to a limited set of classifiers, depending on the alphabet size. See \Cref{sec:limitations} for a discussion.}
\autoref{tab:intrinsic-extrinsic-correlations} show significant, large linear correlation prevalent in all linear alignment methods, whereas CKA---a linear, non-alignment method---does not capture extrinsic behavior as faithfully.
Further, \autoref{fig:intrinsic-extrinsic} visualizes the linear relationship explicitly for all considered GLUE datasets.\looseness=-1

\begin{table}[h]
    \centering
    \begin{tabular}{cccccccc}
        \toprule
        \multicolumn{3}{c}{\textbf{Intrinsic Measure}} & $\onesidedmetricAffApprox$ & Orth. Procrustes & $R_{CCA}^2$ & PWCCA & Linear CKA \\
        \midrule
        \multicolumn{3}{c}{\textbf{Lin. Alignment-Based}} & Yes & Yes & Yes & Yes & No \\
        \midrule
        \multirow{4}{*}{\textbf{SST-2}} & \multirow{2}{*}{$\onesidedmetricAffApprox$} & $\rho$ & 0.080 & 0.095 & \textbf{0.172}* & 0.016 & 0.088 \\
        & & PCC & 0.545* & 0.937* & 0.932* & \textbf{0.970}* & 0.231* \\ \cline{2-8}
        &\multirow{2}{*}{$\pointHausdorffMapApproxVSet$} & $\rho$ & \textbf{0.621}* & 0.157* & 0.071 & 0.231* & 0.295* \\
        & & PCC & \textbf{0.723}* & 0.539* & 0.457* & 0.566* & 0.320* \\ \hline
        \multirow{4}{*}{\textbf{MRPC}} & \multirow{2}{*}{$\onesidedmetricPsiPrime$} & $\rho$ & \textbf{0.309}* & 0.250* & -0.001 & 0.220* & 0.214* \\
        & & PCC & 0.707* & 0.697* & 0.733* & \textbf{0.743}* & 0.241* \\ \cline{2-8}
        &  \multirow{2}{*}{$\pointHausdorffMapApproxVSet$} & $\rho$ & \textbf{0.231}* & 0.025* & 0.178* & 0.059 & 0.030 \\
        & & PCC & 0.790* & 0.755* & \textbf{0.879}* & 0.875* & 0.174* \\ \hline
        \multirow{4}{*}{\textbf{RTE}} & \multirow{2}{*}{$\onesidedmetricPsiPrime$} & $\rho$ & \textbf{0.534}* & 0.053 & 0.037 & 0.308* & 0.185* \\
        & & PCC & \textbf{0.570}* & 0.401* & 0.429* & 0.250* & 0.078 \\ \cline{2-8}
        &  \multirow{2}{*}{$\pointHausdorffMapApproxVSet$} & $\rho$ & 0.234* & 0.317* & -0.147* & \textbf{0.338}* & 0.240* \\
        & & PCC & 0.718* & \textbf{0.870} & 0.778 & 0.780 & 0.205 \\ \hline
        \multirow{4}{*}{\textbf{CoLA}} & \multirow{2}{*}{$\onesidedmetricPsiPrime$} & $\rho$ & \textbf{0.196}* & 0.006 & 0.040 & 0.185* & 0.165* \\
        & & PCC & 0.204* & 0.529* & \textbf{0.553}* & 0.550* & 0.215* \\ \cline{2-8}
        &  \multirow{2}{*}{$\pointHausdorffMapApproxVSet$} & $\rho$ & 0.348* & 0.078 & 0.133* & 0.340* & \textbf{0.380}* \\
        & & PCC & 0.429* & 0.664* & 0.318* & \textbf{0.786}* & 0.513* \\ \hline
        \bottomrule
    \end{tabular} \\
    \vspace{10pt} 
    \caption{Spearman's Rank Correlation Coefficient ($\rho$) and Pearson's Correlation Coefficient (PCC) between intrinsic measures introduced in \Cref{sec:algorithms} and the extrinsic similarities $\onesidedmetricPsiPrime$ and $\pointHausdorffMapApproxVSet$ across various GLUE datasets. * indicates a $p$-value $< 0.01$ (assuming independence). 
    }
    \label{tab:intrinsic-extrinsic-correlations}
\end{table}

\section{Discussion}\label{sec:discussion}

We set out to explore homotopic relationships between language encoders, augmenting existing work on the similarity of finite representation sets by holistically studying encoder \emph{functions}.
In particular, the general framework of $S$-homotopy allows us to study any functional relationship between encoders, enabling the exploration of many types of encoder relationships.
As a first step in this direction and a concrete example, \cref{sec:affine-intrinsic-equivalence} explores \emph{affine} homotopy, discussing what it means to be able to align two models with affine transformations.
Here, Hausdorff--Hoare maps prove useful, as they allow us to measure a notion of (asymmetric) distance between a point---an encoder---and the \emph{set} of all affine transformations of another encoder.
\Cref{lem:intrinsic-extrinsic} in \cref{sec:extrinsic-equivalence} then connects the intrinsic, task-independent, similarly to extrinsic similarity---the similarity of performance on downstream tasks. 
Concretely, it derives a linear relationship between the intrinsic and extrinsic dissimilarity for any fixed affine transformation $\psi'$ (i.e., a fixed downstream task).
\Cref{thm:intrinsic-extrinsic} discusses a stronger bound, namely on the \emph{worst-case} extrinsic dissimilarity among all downstream linear classifiers, i.e., among all possible tasks. 
Further, by accounting for the asymmetries of encoder relationships, we augment the work on similarity in proper metric spaces \citep{williams2021generalizedsm, shahbazi2021riemann, boix2024gulp}.


Although encoders may not be affinely related in practice, empirical evidence in \Cref{sec:experiments} suggests that notions of affine order still surface (cf. \Cref{tab:intrinsic-extrinsic-correlations}, \Cref{fig:intrinsic-extrinsic}), particularly as differently initialized \textsc{BERT}s exhibit variations in downstream task performance \citep{mccoy-etal-2020-berts}. 
While other similarity measures, such as those used in seed specificity tests \citep{ding2021groundingrs}, are designed to remain invariant to initialization changes, our results indicate that intrinsic affine homotopy is appropriately \emph{sensitive} to them. 
This sensitivity raises new questions about the landscape of pre-trained encoders; as seen in \Cref{fig:encoder-asymmetry}, asymmetry in intrinsic affine similarity among similarly pre-trained encoders impacts downstream performance, as corroborated by \Cref{lem:intrinsic-extrinsic} and empirical results in \Cref{tab:intrinsic-extrinsic-correlations}. 
Differences in representation ranks may partly explain this asymmetry---mapping between artificially generated rank-deficient encoders yields mostly symmetric affine distances (cf. \Cref{fig:lora}).
Another explanation might be that easy-to-learn encoders might be approximately linear combinations of others, making them easy to map \emph{to} but not necessarily \emph{from}. 
Overall, our findings highlight the need to account for directionality in encoder similarity measures to address the asymmetry inherent in this problem.

\section{Conclusion}

We discuss the structure of the space of language encoder in the framework of $S$-homotopy---the notion of aligning encoders with a chosen set of functions.
We formalize affine alignment between encoders and show that it provides upper bounds on the differences in performance on downstream tasks.
Experiments show our notion of intrinsic affine homotopy to be consistently predictive of downstream task behavior while revealing an asymmetric order in the space of encoders.

\section*{Broader Impact}
This paper presents foundational research about the similarity of language encoders. 
To the best of our knowledge, there are no ethical or negative societal implications to this work.

\section*{Acknowledgements}
Ryan Cotterell acknowledges support from the Swiss National Science Foundation (SNSF) as part of the ``The Forgotten Role of Inductive Bias in Interpretability'' project.
Anej Svete is supported by the ETH AI Center Doctoral Fellowship.
Robin Chan acknowledges support from FYAYC.
We thank Rapha\"el Baur and Furui Cheng for helpful discussions and reviews of the current manuscript.

\bibliography{custom, anthology}
\bibliographystyle{acl_natbib}
\newpage
\appendix

\section{Notation}\label{sec:notation}

\begin{table*}[h!]\centering
    \begin{tabular}{@{}cp{8cm}c@{}}
        \toprule
        Symbol & Meaning & Introduced\\
        \midrule
        $\featurespacedim$ & Size of string representations. & \Cref{sec:intro} \\
        $V$  & $\featurespacedim$-dimensional $\R$-vector space. & \Cref{sec:language-encoders} \\
        $\overline{\R}_+$ & $\R_+ \cup \set{\infty}$ & \Cref{def:extended-metric} \\
        $\NTo{N}$ $\subset \N$ & The set $\set{1, \ldots, N}$ for $N \in \N$. & \Cref{sec:extrinsic-equivalence}\\
        $\Delta^{N - 1}$  & The $N-1$-dimensional probability simplex. & \Cref{sec:intro}\\ 
        ${\encoders}$ & The vector space of language encoders. & \Cref{sec:language-encoders} \\
        ${\boundedEncoders}$ & The subspace of bounded language encoders. & \Cref{sec:language-encoders} \\
        $\AffV$ & The set of (invertible) affine transformations on $V$. & \Cref{sec:norms-and-distances}\\
        $\GLV$ & The group of all invertible linear maps on $V$ to itself.
         & \Cref{sec:norms-and-distances} \\
        $\OrtV$ & The orthogonal group  of $V$; the group of norm-preserving linear maps on $V$
  & \Cref{section:affine-distances} \\ \midrule
        $\hemimetric$ & A general (hemi-)metric. & \Cref{def:hemi-metric}\\
        $\hemimetricInf$ & The $\infty$ (uniform convergence) norm on $\encoders$. & \Cref{sec:norms-and-distances}\\
        $\hausdorffMap$ & The Hausdorff--Hoare map of $\hemimetric$. & \Cref{def:hemi-props}\\
        $\symHausdorffMap$ & The symmetrized Hausdorff--Hoare map. & \Cref{def:hemi-props}\\
        $\pointHausdorffMapS$ & The Hausdorff--Hoare map where the sets in the arguments are computed by applying all transformations $s \in S$ to the two input elements. & \Cref{eq:def-distances} \\
        $\onesidedmetricS$ & One-sided affine alignment measure over $S$. & \Cref{eq:one-sided-aff}\\
        $\| \cdot \|_S$ & The $S$-homotopy norm of an encoder. & \Cref{eq:one-sided-aff}\\
        $\succsim$ & A preorder. & \Cref{sec:affine-intrinsic-equivalence} \& \Cref{sec:extrinsic-equivalence} \\
        $\simeq$ & An equivalence relation. & \Cref{sec:affine-intrinsic-equivalence} \\
        \bottomrule
    \end{tabular}
    \caption{A summary of notation used in the paper.}
    \label{tab:notation}
\end{table*}

\section{Limitations}\label{sec:limitations}
In this section, we address some of our work's limitations.

\paragraph{Non-Linear Encoder Relationships. } 
This work focuses on affine similarity between encoders. 
As we find and discuss in \Cref{sec:affine-intrinsic-equivalence} with the example of \textsc{MultiBERT}s, language encoder representations may generally not be exactly affinely related. 
Nevertheless, understanding the affine-homotopy relationships on $\encoders$ still helps us to make conclusions about practical findings as in \Cref{sec:experiments}.

\paragraph{Linear Classifiers. } 
Our work provides precise theoretical guarantees on the performance of linear classifiers applied to affinely-related encoders. 
In practice, task fine-tuning can take the form of more complex models, such as re-training entire pre-trained models.
This work does not cover such more complex fine-tuning techniques. 

\paragraph{Numerical Approximations. } 
To bridge our theoretical findings on affine homotopy relationships in $\encoders$ with their practical implementations in \Cref{sec:experiments}, we concede several approximations. 
For instance, while $\pointHausdorffMapVSet$ is valuable in analysis, optimizing \Cref{eq:supinfsup-extrinsic-finite} directly is computationally challenging and requires costly approximations. 
Similarly, in computing intrinsic distances across all representation layers in \Cref{fig:encoder-asymmetry}, we optimize for mean squared error (MSE) and evaluate the maximum loss instead of optimizing for it directly, which serves as an approximation of $\onesidedmetric$ that results in more stable optimization given computational constraints. 
Finally, we address numerical inaccuracies encountered during singular value decomposition (SVD) computations in \Cref{sec:experiments}, which we mitigate by tuning the rank according to precision $\epsilon$.

\section{Additional Related Work}\label{sec:related-work}
In this section, we complement our discussion in \Cref{sec:algorithms} and \Cref{sec:discussion} with additional related work.

\paragraph{Representational and Functional Similarity.} 
Our work is related to the ongoing efforts to quantify the similarity between neural networks.
Much related work discusses similarity measures in terms of the invariance properties of neural networks \citep[][\emph{inter alia}]{pmlr-v97-kornblith19a, ding2021groundingrs}; see \citet{klabunde2023similarity} for a recent comprehensive survey.
Notably, \citet{klabunde2023similarity} compile various \emph{representational} \citep[][\emph{inter alia}]{vinod1976canonicalra, raghu2017svcca, hamilton-etal-2016-diachronic, pmlr-v44-li15convergent, kriegeskorte2008rsa} and \emph{functional} ways to measure similarity, which are related to our notions of intrinsic and extrinsic homotopy, respectively.
Whereas our notion of intrinsic affine homotopy fits into the class of linear alignment-based measures \citep[][\emph{inter alia}]{pmlr-v44-li15convergent, ding2021groundingrs, williams2021generalizedsm} as described in \Cref{sec:algorithms}, the notion of extrinsic similarity fits into the broader category of performance-based functional measures \cite{lenc2019understanding, csiszrik2021similarityam, bansal2021revisitingms}. 
Most relevantly, \citet{boix2024gulp} propose the \textsc{Gulp} metric that provides a bound on the expected prediction dissimilarity for norm-one-bounded ridge regression.

\paragraph{Similarity Measures as Metrics. } 
A line of work draws from \emph{statistical shape analysis} \citep{small2012statistical} to motivate the development of similarity measures that are that conform to axioms of valid metrics \citep{williams2021generalizedsm, shahbazi2021riemann, boix2024gulp}.
Learning within proper metric spaces provides certain theoretical guarantees \citep{baraty2011triangularclustering, yianilos1993nn, dasgupta2005hierarchical, chang2016clustering, wang2015distance}.
For example, \citet{williams2021generalizedsm} derive two families of \emph{generalized shape metrics}, modifying existing dissimilarity measures to ensure they meet metric criteria.
Notably, one of these generalized shape metrics is based on linear regression over the group of linear isometries, similar to the approach derived for encoder maps in \Cref{prop:iso-metric-space}.

\paragraph{Understanding Similarity of Language Encoders. } 
Finally, several previous works characterize the landscape of language encoders and their sensitivity to slight changes to the pre-training or fine-tuning procedure \citep{dodge2020finetuning, DAmour2020UnderspecificationPC, zhong-etal-2021-larger-custom}.
This prompted multi-seed releases of encoders such as \textsc{BERT} \citep{sellam2021multiberts, zhong-etal-2021-larger-custom} that are frequently used for robustness or sensitivity analysis \citep{ding2021groundingrs, ren2023roads}, similar to the one presented in this work.

\section{Addenda on Affine Homotopy}\label{sec:proofs-affine}
In this section, we provide additional derivations and proofs complementing the discussion in ~\Cref{sec:affine-distances}--\Cref{sec:extrinsic-equivalence}.

\subsection{Preliminaries on Hemi-Metric Spaces}

\begin{definition}
Let $(X,\hemimetric)$ be a hemi-metric space. The \defn{open ball} $B(x,\epsilon)$ of center $x$ and radius $\epsilon >0$, is the set $\{y \in X \mid  \hemimetric(x, y)<\epsilon\}$. 
The open balls form a base for the open ball topology.\footnote{This, by definition, is the topology generated by all open balls.}
\end{definition}

\specializationOrdering*

\begin{example}
An example of a specialization ordering is the prefix ordering of strings $\le_{\text{prefix}}$\footnote{Defined by $\str \le_{\text{prefix}} \str'$ if $\str$ is a prefix of $\str'$.}. 
More precisely, for any $\str,\str' \in \kleene{\alphabet}$, we define $d_{\kleene{\alphabet}}(\str, \str')$ to be zero if $\str$ is a prefix of $\str'$ and $2^{-n}$ otherwise, where $n$ is the length of the longest prefix of $\str$ that is also a prefix of $\str'$. Then $(\kleene{\alphabet},d_{\kleene{\alphabet}})$ is a hemi-metric space whose specialization ordering is $\le_{\text{prefix}}$.
\end{example}

\begin{lemma}\label{lem:hemi-props} 
    Let $(X, \hemimetric)$ be a hemi-metric space. 
    \begin{itemize}
        \item[1.] The set $\{x \in X \mid  \preHausdorffMap(x, E)=0\}$ is exactly the closure of $E$ in the open ball topology.
        \item[2.] For any $x, x' \in X$, we have the inequality $\preHausdorffMap(x, E) \le \hemimetric(x, x') + \preHausdorffMap(x', E)$. 
        If $\hemimetric$ is a metric, then $\preHausdorffMap(\cdot, E)$ is $1$-Lipschitz from $(X, \hemimetric)$ to $\overline{\R}_+$. 
        \item[3.] Let $\sZ \subset \sP(X)$ be any space of non-empty subsets of $X$. 
        The \textbf{Hausdorff--Hoare map} $\hausdorffMap$
        is hemi-metric on $\sZ$. 
        Its specialization ordering $\succsim_{\hausdorffMap}$ is given by $E \succsim_{\hausdorffMap}E'$ iff\footnote{Here, the closure is with respect to the topology defined by $\hemimetric$.} $E \subset \text{cl}(E')$, iff $\text{cl}(E) \subset \text{cl}(E')$.
    \end{itemize}
\end{lemma}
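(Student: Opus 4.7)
The plan is to prove the three parts in order, since each builds on the previous. For part 1, I would unfold the definition of closure in the open ball topology: a point $x$ lies in $\mathrm{cl}(E)$ iff every basic open neighborhood of $x$ meets $E$, i.e., every open ball $B(x,\varepsilon)$ contains some point of $E$. Since the open balls $\{B(x,\varepsilon)\}_{\varepsilon>0}$ form a neighborhood base at $x$, this is equivalent to the statement that for every $\varepsilon>0$ there exists $y\in E$ with $d(x,y)<\varepsilon$, which is in turn equivalent to $\inf_{y\in E}d(x,y)=0$, i.e., $d(x,E)=0$. The only subtlety is that the open ball topology is generally not $T_1$ when $d$ is merely a hemi-metric, but this does not affect the argument since we only need a neighborhood base.

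For part 2, the plan is to apply the triangle inequality pointwise. For any fixed $y\in E$, the hemi-metric triangle inequality yields $d(x,y)\le d(x,x')+d(x',y)$; taking $\inf_{y\in E}$ on both sides gives $d(x,E)\le d(x,x')+d(x',E)$. When $d$ is symmetric (a metric), swapping the roles of $x$ and $x'$ yields the reverse inequality $d(x',E)\le d(x,x')+d(x,E)$, so $|d(x,E)-d(x',E)|\le d(x,x')$, proving that $d(\cdot,E)$ is $1$-Lipschitz.

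For part 3, I would first verify the two hemi-metric axioms for $d^\sH$. Reflexivity $d^\sH(E,E)=0$ follows from $d(x,x)=0$ for every $x\in E$. For the triangle inequality, fix $x\in E$ and any $y\in E'$; by part 2, $d(x,E'')\le d(x,y)+d(y,E'')\le d(x,y)+d^\sH(E',E'')$. Taking $\inf_{y\in E'}$ yields $d(x,E'')\le d(x,E')+d^\sH(E',E'')$, and then $\sup_{x\in E}$ gives $d^\sH(E,E'')\le d^\sH(E,E')+d^\sH(E',E'')$. For the specialization ordering, I would chain equivalences: $E\succsim_{d^\sH}E'$ iff $\sup_{x\in E}d(x,E')=0$ iff $d(x,E')=0$ for every $x\in E$ iff (by part 1) $E\subset\mathrm{cl}(E')$. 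The final equivalence $E\subset\mathrm{cl}(E')\Leftrightarrow \mathrm{cl}(E)\subset\mathrm{cl}(E')$ is then immediate from idempotence and monotonicity of the topological closure.

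I do not expect a serious obstacle; the proof is essentially a formal manipulation of infima/suprema together with the triangle inequality. The most delicate step is part 1, where one must be careful that open balls form a neighborhood base even in the absence of symmetry, so that closure can be characterized via open balls rather than via symmetric neighborhoods.
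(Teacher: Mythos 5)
Your proof is correct; the paper itself gives no argument for this lemma, deferring entirely to the cited results of Goubault-Larrecq (Lemma 6.1.11, Proposition 6.2.16, Lemma 7.5.1), and your reasoning is precisely the standard argument behind those results, so there is nothing genuinely different to compare. The only step worth writing out explicitly is the one you flagged in part 1: balls centered at $x$ form a neighborhood base at $x$ because $z \in B(x,\varepsilon)$ implies $B(z,\varepsilon - d(x,z)) \subseteq B(x,\varepsilon)$ by the hemi-metric triangle inequality (no symmetry needed), after which parts 1--3 follow exactly as you describe.
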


\begin{proof}
    See \citet[Lemma 6.1.11, Proposition 6.2.16 \& Lemma 7.5.1]{Goubault-Larrecq_2013}.
\end{proof}

\subsection{Additional Derivations: Affine Alignment Measures}\label{sec:proofs-affine-alignment}

\twoSided*
\begin{proof}
    To see that $\onesidedmetricAff$ is not a metric, consider the following two encoders: $\encg (\str)= |\str| \cdot e$, where $e\in V$ is any fixed vector, and $\enc$ be any map from $\kleene{\alphabet}$ to the ball $B(0_V,1)$ of radius one. In such a case, we have
    $\onesidedmetricAff(\enc,\encg)=\infty$.
    Even on the space of bounded encoders \footnote{Recall \(\boundedEncoders\defeq  \{\enc \in {\encoders} \mid \enc(\kleene{\alphabet}) \text{ is bounded}\}.\)} $\onesidedmetricAff$ is not a metric. 
    We provide the following counter-example: Let $\enc$ be any rank $R$ encoder, e.g., $\enc$ can be any map that sends the first $R$ strings to the basis of $V$. 
    Let $A$ be a non-invertible linear map of $V$ and set $\encg = A(\enc)$. Then clearly $\onesidedmetricAff(\enc,\encg)=0$, but $\onesidedmetricAff(\encg,\enc)$ can not be zero for dimensionality reasons (see \cref{thm:orthogonalProjection}).
\end{proof}

\begin{lemma}[Hausdorff Distance]\label{lem:hausdorffdist}
    Let $E, E' \subset \encoders$. 
    The map
    \begin{equation}
    \symHausdorffMapInf(E,E')\defeq\max(\hausdorffMapInf(E,E'),\hausdorffMapInf(E',E))=\sup_{\enc \in {\encoders}}|\hausdorffMapInf(\enc,E)-\hausdorffMapInf(\enc,E')|
    \end{equation} 
    is an extended pseudo-metric on $\sP({{\encoders}})\setminus\{\emptyset\}$.
\end{lemma}
\begin{proof}
     It follows readily from \Cref{lem:hemi-props}. See also \citet[\S 7.3.1]{BuragoBuragoIvanov2001}.
\end{proof}

For any affine subgroup $S\subset \AffV$, let $S(\enc)\defeq \{\psi(\enc) \mid \psi\in S\}$. 
It then follows immediately from \Cref{lem:hausdorffdist} that the map $\symHausdorffMapS(\enc,\encg)\defeq \symHausdorffMapInf(S(\enc),S(\encg))$ is an extended pseudo-metric on ${\encoders}$.

\begin{lemma}\label{isoinvariance}
For any $\enc,\encg \in {\encoders}$, any $\psi \in \IsoV$ and any non-empty $S\subset {\encoders}$, we have 
\begin{equation}
\hemimetric_S(\psi \circ \enc,\encg)=\hemimetric_{\psi^{-1}S}(\enc,\encg).
\end{equation}
In particular, $\onesidedmetricIso(\psi\circ\enc,\encg)=\onesidedmetricIso(\enc,\encg)$. 
\end{lemma}
\begin{proof}
\cref{isoinvariance} follows by definition $\hemimetricInf(\psi  \circ \enc , \psi \circ \encg)=\hemimetricInf(\enc,\psi^{-1}\circ\psi \circ \encg )$. 
\end{proof}

\isoMetricSpace*
\begin{proof}
    Using \Cref{isoinvariance}, one can show that  $\onesidedmetricIso(\enc,\encg) = \symHausdorffMapInf(\Iso(\enc),\Iso(\encg))$, where $\Iso(\enc)\defeq \{\psi \circ \enc \colon \psi\in \IsoV\}$. 
    The proposition follows then from \cref{lem:hausdorffdist}.
\end{proof}

For any $\psi \in \AffV$ and any $\enc \in {\encoders}$, we then have
\[ \psi \circ \enc \in \boundedEncoders \Leftrightarrow \enc \in \boundedEncoders\Leftrightarrow  \| \enc \|_\infty <\infty.\] 

\begin{restatable}{lemma}{affineDistanceRelation} \label{lem:affineDistances}
\
\begin{itemize}
    \item[1.] If $\enc \in \boundedEncoders$, then 
    \[ \| \enc \|_{\IsoV} =\| \enc \|_\sT= r_\enc ,\]
    where $r_\enc$ denotes the radius of $\enc$, which we define as the radius of the minimum enclosing ball of the set $\enc(\kleene{\alphabet})$, and the $\| \cdot \|_{\IsoV}$ norm is defined as in \cref{eq:def-distances}.
    \item[2.] For any $\psi \in \AffV$ and a subset $S\subset \AffV$ normalized\footnote{The set $S$ is normalized by $\psi$ if $\psi^{-1}\circ\phi\circ \psi \in S$ for all $\phi \in S$.} by $\psi_{}$ and containing $\sT$. Then
    \begin{equation*}\label{ineqnormVS}
        \|\psi \circ \enc \|_S \le \|\psi_{\text{lin}}\|_V \cdot \|\enc\|_S,
    \end{equation*}
    where the $\| \cdot \|_{S}$ norm is defined as in \cref{eq:def-distances}.
\end{itemize}
\end{restatable}

\begin{proof}
\
\begin{itemize}
\item[1.] Let $t\in \sT$ be the translation moving the center of the ball enclosing $\enc(\kleene{\alphabet})$ to the center $0_V$. Hence
\[ \|\enc \|_{\text{Iso}(V)} \le \| \enc \|_\sT \le \|t \circ \enc\|_\infty = r_\enc \]
Now observe that for any other isometry $\psi\neq t$, then $r_{\psi\circ \enc}=r_\enc$. 
The ball $B(0_V,\|\psi\circ \enc\|_\infty)$ clearly contains all points in $\psi\circ \enc  (\kleene{\alphabet})$, hence by definition of the radius $r_{\psi\circ \enc}$ we must have $\|\psi\circ \enc\|_\infty \le r_\enc$, which finishes the proof of 1. 
\item[2.]  Write $\psi = \phi_{\text{lin}} \circ t$, with $t \in \sT$. We then have 
\begin{align*}
     \|\psi \circ \enc \|_S&= \inf_{\phi \in S} \|\phi (\psi\circ \enc)\|_\infty\\
     &=\inf_{\phi \in S} \|\psi_{\text{lin}} (\underbrace{\psi_{\text{lin}}^{-1}\circ\phi\circ \psi_{\text{lin}}\circ t}_{\in S} \circ \enc)\|_\infty
 \end{align*}
 Note that $\phi \mapsto \psi_{\text{lin}}^{-1}\circ\phi\circ \psi_{\text{lin}}\circ t$ is by definition a bijection of $S$, hence 
 \begin{align*}
     \|\psi \circ \enc \|_S
     &=\inf_{\phi \in S} \|\psi_{\text{lin}} (\phi \circ \enc)\|_\infty\\
     &=\inf_{\phi \in S} \sup_{\str \in \kleene{\alphabet}}|\psi_{\text{lin}} \left((\phi \circ \enc)(\str)\right)|_V\\
     &\le \|\psi_{\text{lin}}\|_V  \inf_{\phi \in S} \sup_{\str \in \kleene{\alphabet}} |(\phi \circ \enc)(\str))|_V\\
     &=\|\psi_{\text{lin}}\|_V\cdot \|\enc\|_S.\qedhere
 \end{align*}
 \end{itemize}
\end{proof}

\begin{corollary}\label{trivial:affinedistance}
    Let $S\supset \sT$ such that $\inf_{\psi\in S}\|\psi_{\text{lin}}\|_V$.\footnote{This is, for example, the case if $S$ is a group and there exists $\phi \in S$ such that $\|\phi_{\text{lin}}\|_V<1$, e.g., $S=\Aff(F)$.} 
Then, $\hemimetricS(\enc,\encg) \defeq \inf_{\psi,\psi' \in S} \| \psi \circ \enc- \psi' \circ \encg\|_\infty = 0$ for all $\enc,\encg \in \boundedEncoders$. 
\end{corollary}
\begin{proof}
     Note that $\onesidedmetricS(\psi\circ\enc,\encg) \le \|\psi_{\text{lin}}\|_V \cdot  \onesidedmetricS(\enc,\encg)$, which follows from \cref{ineqnormVS}. Hence
 \begin{align*}
      \hemimetricS(\enc,\encg) &= \inf_{\psi\in S}  \onesidedmetricS(\psi\circ\enc,\encg)\\
      &\le \underbrace{\inf_{\psi\in S}\|\psi_{\text{lin}}\|_V}_{=0} \, \onesidedmetricS(\enc,\encg).\qedhere
 \end{align*}
\end{proof}

\subsection{Proofs: Intrinsic Affine Homotopy}\label{sec:proofs-affine-equivalence}

\affinePreorder*
\begin{proof}
    Since $\onesidedmetricAff(\psi\circ\enc,\encg) \le \|\psi_{\text{lin}}\|_V \cdot  \onesidedmetricAff(\enc,\encg)$ (see \Cref{lem:affineDistances}), 
    $$\onesidedmetricAff(\enc,\encg)=0 \Leftrightarrow \pointHausdorffMapAff(\enc,\encg)=0.$$
    Therefore, the relation $\succsim_\Aff$ is the specialization ordering of the hemi-metric $\pointHausdorffMapAff$. 
\end{proof}

\orthogonalProjection*

\begin{proof}
\
\begin{itemize}
    \item[1.]
    Recall from \Cref{sec:norms-and-distances} that $\sE_V$ is complete with respect to the metric $\hemimetricInf$. 
    The condition $ \onesidedmetric(\enc, \encg)_{\AffV}= 0$ simply means that there exists $\phi_n\in \AffV$ such that $\lim_{n\to \infty} \phi_n \circ \encg= \enc$ in ${\encoders}$, in other words $\enc \in \overline{\Aff(\encg)}$, i.e., $\enc$ lies in the closure of $\Aff(\encg)$ in ${\encoders}$. 
 
    Let $\sB_\enc \subset \kleene{\alphabet}$  such that $\enc(\sB_\enc)$ is a basis for $V_\enc$. 
    Therefore, 
     there exists $\epsilon >0$ such that any family\footnote{close enough to $\enc(\sB_\enc)$.} 
    \[ (v_\str)_\str \in \prod_{\str \in \sB_\enc} B(\enc(\str),\epsilon)\] 
    has rank $\dim_\R(V_\enc)$. 
    This shows that there exists $N\ge 1$ such that
    for any $n\ge N$ one has
    $\|\enc-\phi_n\circ\encg\|_\infty<\epsilon$, and 
    \[\rank(\{\phi_n\circ\encg(\str)\colon \str \in \sB_\enc\})= \rank(\enc).\]
    Which implies in particular 
    \begin{equation}\label{samerank}        \dim_\R(V_\enc)\le \dim_\R(V_\encg) \text{ i.e., } \rank{\enc}\le \rank{\encg}.
    \end{equation}

    \item[2.] If $\rank(\encg)=\rank(\enc)=D$, then $\lim_{n\to \infty}\phi_n=\phi$, where $\phi$ is the affine map given by $\encg(\str) \mapsto \enc(\str)$ for $\str \in \sB_\enc$.  
    Indeed, for any $v =\sum_{b \in \sB_\enc} \lambda_b b \in V$, we have 
    \[ \|(\phi -\phi_n)(v)\|
    \le  \|\enc-\psi_n\circ\encg\|_\infty  \, \sum_{b \in \enc(\sB_\enc)} |\lambda_b|\le  c \|\enc-\psi_n\circ\encg\|_\infty \,\|v\|_V\]
    for some constant $c>0$, since all norms on $V$ are equivalent. 
    Hence, $\lim_{n\to \infty}\|\phi -\phi_n\|_V =0$, which shows the claim.  
    Accordingly, we must have 
    $\phi \circ \encg=\enc$.
    
    Now we can prove \cref{affdjzero}: 
    \item [3 $\Rightarrow$.] 
    Given that 
    $ \|\enc- \pi_\enc \circ \phi_n \circ \encg \|_\infty\le \|\enc-  \phi_n \circ \encg \|_\infty, $ 
    we also have $\lim_{n\to \infty} \pi_\enc \circ \phi_n \circ \encg= \enc$. 

    Write $\pi^\perp$ for the orthogonal projection on $V^\perp$ and set $\pi_{\enc,n} = \pi_\enc \oplus \frac{1}{n\|\phi_n\|}\pi_\enc^\perp$. Note that $\lim_{n\to \infty}\pi_{\enc,n}  =\pi_{\enc} $. 
    Accordingly, 
    \[ \lim_{n\to \infty} \psi_n (\pi \circ \encg)=\enc,\]
    where $\psi_n = \pi_{\enc,n}\phi_n \pi_{\enc,n}^{-1}$.  
    From this, we deduce that
    \[\onesidedmetricAff(\enc, \pi_\enc \circ\encg)= 0.\]
    Now applying 2. yields 
    $\enc = \phi(  \pi_h\circ\encg) \text{ for some } \phi_h\in \Aff(V_\enc)$, or $\enc = \phi(  \pi_\enc \circ\encg)$ where $\phi=\phi_h \oplus \pi_\enc^\perp \in \AffV$.
    \item [3 $\Leftarrow$.]
    Assume now that  $\enc = \phi(  \pi_\enc\circ\encg) \text{ for some } \phi\in \AffV$. 
    Then $\enc= \lim_{n\to \infty} \phi \circ  \pi_{\enc,n} (\encg)$, where $\pi_{\enc,n} = \pi_\enc \oplus \frac{1}{n}\pi_\enc^\perp$, which shows the desired implication. \qedhere
\end{itemize}
\end{proof}

\subsection{Proofs: Extrinsic Homotopy}\label{sec:proofs-extrinsic-equivalence}
\IntrinsicExtrinsiclemma*
\begin{proof}
\
    \begin{itemize}
        \item[1.] Clearly, 
        \begin{subequations}
        \begin{align*}
        \preHausdorffMapDeltaInf(\softmax_\lambda(\psi \circ \enc), \Vset_N(\encg))&\le 
        c(\lambda) \hemimetric_{\Vset(V,W)}(\psi \circ \enc,  \Aff_{V,W}(\encg))\\
            &\le c(\lambda)\inf_{\psi'\in \psi\circ \AffV}\|\psi\circ \enc-\psi'\circ \encg\|_{\infty,W}\\
            &=c(\lambda)\inf_{\psi'\in \AffV}\|\psi (\enc-\psi' \circ \encg)\|_{\infty,W}\\
            &=c(\lambda)\|\psi_{\text{lin}}\|\onesidedmetricAff(\enc, \encg).
        \end{align*}
        \end{subequations}

        where, the first inequality follows from the fact that $\softmax_\lambda$ is $c(\lambda)$-Lipschitz for some constant that depends on $\lambda$ \citep[Proposition 4]{gao2017ontp}. 
    
        \item[2. \& 3.] are are immediate consequences of 1. \qedhere 
    \end{itemize}
\end{proof}

\intrinsicExtrinsic*
\begin{proof}
Let $\psi \in \Aff(V,W)$. 
There exists a linear map $A \colon V \to W$ and a $\phi_V\in \GL(V)$, 
such that $\psi= A \circ \phi$ and $\|A\|=1$. 
Accordingly, \Cref{lem:intrinsic-extrinsic} yields
\begin{subequations}
\begin{align}
    \preHausdorffMapDeltaInf (\softmax_\lambda(\psi \circ \enc), \Vset_N(\encg))&\le 
c(\lambda)\, \hemimetric_{\infty,W}(\psi \circ \enc, \Aff_{V,W}(\encg))\\
&\le  c(\lambda)  \onesidedmetricAff(\phi_V \circ \enc, \encg) \\
&\le c(\lambda)  \sup_{\psi_V \in \AffV} (\onesidedmetricAff(\psi_V\circ \enc, \encg))\\
&=  c(\lambda)\, \hemimetric_{\AffV}^{\sH}(\enc, \encg).  
\end{align} 
\end{subequations}
Therefore \( \pointHausdorffMapVSet(\enc, \encg) \le c(\lambda)\, \hemimetric_{\AffV}^{\sH}(\enc, \encg)\). 
\end{proof}

\section{Addenda on Linear Alignment Methods for Finite Representation Sets}\label{sec:proofs-algorithms}
\paragraph{Linear Regression}
A common way to evaluate the similarity of two representation matrices $\mH \in \R^{\nsamples \times \featurespacedim}$ and $\mG \in \R^{\nsamples \times \featurespacedim}$ is through linear regression.
Linear regression finds the matrix $\hat\mA \in \R^{\featurespacedim \times \featurespacedim}$ that minimizes the least squares error:
\begin{equation}
    \hat{\mA} = \argmin_{\mA\in \R^{\featurespacedim \times \featurespacedim}} \|\mG - \mH\mA\|_F^2 = (\mH^\top\mH)^{-1} \mH^{\top}\mG.
\end{equation}
Let $\mH = \mQ_\mH\mR_\mH$ and $\mG = \mQ_\mG\mR_\mG$ be the QR-decomposition of $\mH$ and $\mG$, respectively. The goodness of fit is commonly evaluated through the R-squared value $R_{LR}^2$, i.e., as the proportion of variance in $\mG$ explained by the fit: 
\begin{equation}\label{eq:r-lr}
    R_{LR}^2 = 1 - \frac{\|\mG - \mH\hat{\mA}\|_F^2}{\|\mG\|_F^2} = \frac{\|\mQ_\mG^\top \mH \|_F^2}{\|\mG\|_F^2}.
\end{equation}

To derive \cref{eq:r-lr}, consider the fitted value $\hat\mG$
\begin{subequations}
\begin{align}
   \hat\mG = \mH\hat\mA &= \mH(\mH^\top \mH)^{-1}\mH^\top\mG \\
   &=\mQ_\mH \mR_\mH (\mR_\mH^\top\mQ_\mH^\top\mQ_\mH\mR_\mH)^{-1}\mR_\mH^\top\mQ_\mH^\top\mG \\
   &=\mQ_\mH \mQ_\mH^\top \mG.
\end{align}
\end{subequations}
The residuals are therefore
\begin{subequations}
\begin{align}
    \|\mG - \hat\mG\|_F^2 &= \mathrm{tr}((\mG - \hat\mG)^\top(\mG - \hat\mG)) \\
    &= \mathrm{tr}((\mG - \hat \mG)^\top \mG) \justification{residuals orthogonal to fitted values} \\
    &=\mathrm{tr}(\mG^\top\mG) - \mathrm{tr}(\mG^\top\mQ_\mH\mQ_\mH^\top \mG) \\
    &= \|\mG\|_F^2 - \|\mQ_\mH^\top\mG\|_F^2.
\end{align}
\end{subequations}
With this, we can compute the coefficient of determination as 
\begin{equation}
    R_{LR}^2 = 1-\frac{ \|\mG - \hat\mG\|_F^2}{\|\mG\|_F^2} = 1-\frac{ \|\mG\|_F^2-\|\mQ_\mH^\top\mG\|_F^2}{\|\mG\|_F^2} = \frac{\|\mQ_\mH^\top\mG\|_F^2}{\|\mG\|_F^2}.
\end{equation}

\paragraph{Orthogonal Procrustes Problem. } Let $\mG \in \R^{\nsamples \times \featurespacedim}$ and $\mH \in \R^{\nsamples \times \featurespacedim}$ representation matrices. 
In the orthogonal Procrustes problem, we seek to find the \emph{orthogonal} matrix $\mA$ that best maps $\mH$ to $\mG$:
\begin{equation}\label{eq:orthogonal-procrustes}
    \argmin_{\mA\in \OrtV} \| \mH - \mA\mG\|_F.
\end{equation}
Since
\begin{align*}
    \|\mG - \mH \mA \|_F^2 &= \mathrm{tr}((\mG - \mH\mA)^\top(\mG - \mH\mA)) \\
    &= \mathrm{tr}(\mG^\top\mG) - \mathrm{tr}(\mG^\top\mH\mA) - \mathrm{tr}(\mA^\top\mH^\top\mG) + \mathrm{tr}(\mA^\top\mH^\top\mH\mA) \\
    &= \|\mG\|_F^2 + \|\mH\|_F^2 - 2 \mathrm{tr}(\mA^\top\mH^\top\mG),
\end{align*}
an equivalent objective to \Cref{eq:orthogonal-procrustes} is
\begin{align*}
    \hat{\mA} &= \argmax_{\mA\in \OrtV} \langle\mA\mH, \mG\rangle_F 
\end{align*}
Let $\mU\mSigma\mV^\top$ be the singular-value decomposition of $\mH^\top\mG$, then
\begin{subequations}
    \begin{align}
    \hat{\mA} &=\argmax_{\mA\in \OrtV} \langle\mA\mH, \mG\rangle_F \\
    &= \argmax_{\mA\in \OrtV} \langle\mA, \mG\mH^\top\rangle_F \\
    &= \argmax_{\mA\in \OrtV} \langle \mA, \mU\mSigma\mV^\top \rangle_F \\
    &= \argmax_{\mA\in \OrtV} \langle \mU^\top\mA\mV, \mSigma \rangle_F \label{eq:a-hat-final}
    \end{align}
\end{subequations}
where $\mU^\top\mA\mV$ is a product of orthogonal matrices, and, therefore, orthogonal. 
Since $\mSigma$ is diagonal, \cref{eq:a-hat-final} is maximized by $\mU^\top\hat{\mA}\mV = \mI$, which means that $\hat{\mA}=\mU\mV^\top$.

\paragraph{Canonical Correlation Analysis. }
We can rewrite the CCA objective from \Cref{eq:cca} as
\begin{equation}
    \max_{\mA, \mB} \mathrm{tr}(\mA^\top\mH\mG^\top\mB) \qquad \text{s.t.} \quad (\mA^\top\mH)(\mA^\top\mH)^\top = (\mB^\top\mG)(\mB^\top\mG)^\top = \mI,
\end{equation}
which, by definition of the Frobenius norm, is equivalent to \cref{eq:cca-reform}.
Let $\mM_{\mH\mG} = \mH\mG^\top$, $\mM_{\mH\mH}= \mH\mH^\top$, $\mM_{\mG\mG} = \mG\mG^\top$, and let $\mU\mSigma\mV^\top = \mM_{\mH\mG}$ be the singular-value decomposition of $\mM_{\mH\mG}$.
One can show that the optimum of \Cref{eq:cca-reform} is found at $(\hat\mA, \hat\mB) = (\mM_{\mH\mH}^{-\frac{1}{2}}\mU, \mM_{\mG\mG}^{-\frac{1}{2}}\mV)$. 
Because $\mA^\top\mH$, $\mB^\top\mG$, $\mU$, and $\mV$ are by definition orthogonal, we see that CCA first whitens the representations $(\mH, \mG)$ through $(\mM_{\mH\mH}^{-\frac{1}{2}}, \mM_{\mG\mG}^{-\frac{1}{2}})$ and then orthogonally transforms them.
This provides the intuition behind a close relationship between CCA and the Orthogonal Procrustes problem: For pre-whitened representation matrices, CCA (\Cref{eq:cca-reform}) is equivalent to solving the Orthogonal Procrustes problem (\Cref{eq:orthogonal-procrustes}). 
To see this, let $\mW_\mH$ and $\mW_\mG$ be whitening transforms for $\mH$ and $\mG$, respectively.
Then, \cref{eq:cca-reform} is equivalent to 
\begin{equation}\label{eq:cca-white}
    \min_{\mA, \mB \in \OrtV} \|\mA^\top\mW_{\mH} \mH - \mB^\top \mW_{\mG}\mG\|_F^2
\end{equation}
such that
\begin{subequations}
\begin{align}
    (\mA \mW_{\mH}  \mH)(\mA \mW_{\mH}  \mH)^\top &=  \mA\mA^\top = \mI, \\ 
  (\mB \mW_{\mG}  \mG)(\mB \mW_{\mG}  \mG)^\top &=  \mB\mB^\top = \mI.
\end{align}
\end{subequations}
Therefore, we can derive
\begin{subequations}
    \begin{align}
    \min_{\mA, \mB \in \OrtV} \|\mA^\top\mW_{\mH} \mH - \mB^\top \mW_{\mG}\mG\|_F^2 &= \min_{\mA\mB^\top \in \OrtV}  \|\mA^\top\| \|\mW_{\mH} \mH - \mA\mB^\top \mW_{\mG}\mG\|_F^2 \justification{$\mA\in \OrtV$}
    \\ &= \min_{\mC \in \OrtV} \|\mW_\mH \mH - \mC^\top \mW_\mG \mG\|. \justification{$\mC \defeq \mA\mB^\top \in \OrtV$},
    \end{align}
\end{subequations}
which is equivalent to solving the Orthogonal Procrustes problem (\Cref{eq:orthogonal-procrustes}) on the whitened matrices $\mW_\mH \mH$ and $\mW_\mG \mG$.

\section{Experimental Setup} \label{sec:experimental-setup}
In this section, we provide additional details about the setup and compute resources of the experiments in \Cref{sec:experiments}. To generate embeddings, we used the open-sourced code by \citet{ren2023roads}. Further, for Orthogonal Procrustes, CCA, PWCCA, and Linear CKA, we use the open source implementation by \citet{ding2021groundingrs}. 
Our complete code is added as supplementary material. 
\paragraph{Models and Datasets. }
We first extract the $\featurespacedim = 768$ dimensional training set representations for SST-2, MRPC, RTE, CoLA, MNLI, and QQP across all 12 layers of ELECTRA \cite{clark2020electra}, \textsc{RoBERTa} \citep{liu2019robertaar}, and the 25 \textsc{MultiBERT} \citep{sellam2021multiberts} models from HuggingFace.\footnote{\href{https://huggingface.co/google}{https://huggingface.co/google}} The models and the MRPC dataset are licensed under \verb|Apache License 2.0|. The SST-2 dataset is licensed under the \verb|Creative Commons CC0: Public Domain| license. The RTE dataset is licensed under the \verb|CC BY 3.0| license. The CoLA dataset is licensed under the \verb|CC BY-SA 4.0| license. The MNLI dataset is licensed under the \verb|General Public License (GPL)|. THE QQP dataset is licensed under a custom non-commercial license.\footnote{\href{https://www.quora.com/about/tos}{https://www.quora.com/about/tos}}
The dataset statistics are shown in \Cref{tab:training-data}. We note that for all experiments, MNLI and QQP were shortened to the first 10K training samples due to computational limitations.

\begin{table}[H]
    \centering
    \setlength\tabcolsep{4pt}
    \begin{tabular}{llccl}
    \toprule
    Dataset & Task & Train Dataset Size & Domain \\ \midrule
    SST-2   & Sentiment Analysis           & 67K   & Movie reviews      \\
    MRPC    & Paraphrase Detection         & 3.7K  & News               \\
    RTE     & Textual Entailment           & 2.5K  & Mixed              \\
    CoLA    & Linguistic Acceptability     & 8.5K  & Miscellaneous      \\
    MNLI    & Natural Language Inference   & 393K  & Multi-Genre        \\
    QQP     & Paraphrase Detection         & 364K  & Social QA          \\ \bottomrule
    \bottomrule
    \end{tabular}
    \caption{Statistics for the used GLUE benchmark \citep{wang2019glue} datasets.}
    \label{tab:training-data}
\end{table}
\paragraph{Hyperparameters. } Each experiment was run using RiemannSGD\footnote{\href{https://github.com/geoopt/geoopt}{https://github.com/geoopt/geoopt}} as an optimizer as it initially produced the best convergence when computing our affine similarity measures. Further, to account for convergence artifacts, we ran the intrinsic similarity computation optimizations in each experiment for learning rates $[1 \textsc{e}\text{-}4, 1 \textsc{e}\text{-}3, 1 \textsc{e}\text{-}2,  1 \textsc{e}\text{-}1]$ and extrinsic computations for $[1 \textsc{e}\text{-}3, 1 \textsc{e}\text{-}2, 2 \textsc{e}\text{-}2]$ and report the best result. 
When training the task-specific linear probing classifier $\psi'$ for $\onesidedmetricPsiPrime$, we use the cross-entropy loss, RiemannSGD and optimize over the learning rates $[1 \textsc{e}\text{-}2, 1 \textsc{e}\text{-}1, 2 \textsc{e}\text{-}1,  4 \textsc{e}\text{-}1]$.
For the computation of Hausdorff--Hoare map $\onesidedmetric^\sH$, we fixed a lr of $1 \textsc{e}\text{-}3$ to save compute resources, as this lr generally leads to the best convergence in previous experiments.
We used a batch size 64 and let optimization run for 20 epochs, keeping other parameters at default.
For reproducibility, we set the initial seed to 42 during training.

\paragraph{Generating Random Affine Maps. } For the last experiment, we generate random affine maps.
To approximate $\onesidedmetric^\sH$ we sample the matrix entries of the affine map from $\mathcal{N}(0, 1)$. We then additionally normalize the transformed representation matrix as this leads to better convergence. To approximate $\pointHausdorffMapApproxVSet$, we fit a linear probe on $\mH$ to 100 sets of randomly generated class labels, for the embeddings of each task. The predictions of that probe then become what $\mG$ affinely maps to. In both cases, the seeds are set ascendingly from 0.
 
\paragraph{Compute Resources. } We compute the embeddings on a single $\verb|A100-40GB|$ GPU, which took around two hours. All other experiments were run on \verb|8-32 CPU| cores, each with \verb|8 GB| of memory. Computing extrinsic distances between 600 model combinations across both datasets usually takes 2-3 hours on 8 cores, whereas intrinsic computation is more costly, and can run up to 4 hours. Note our approximation of Hausdorff--Hoare maps (cf. \Cref{eq:supinfsup-extrinsic-finite}) across all models is significantly more costly due to our sampling approach and can take up to 72 hours to compute on 32 cores for large datasets such as SST-2, and up to 12 hours for MRPC. The resources needed for initially failed experiments do not significantly exceed the reported compute.

\section{Additional Experimental Results} \label{sec:additional-experiments}

\paragraph{The Influence of Encoder Rank Deficiency. }
In \Cref{thm:orthogonalProjection} we discuss how the relative rank of encoders influences their affine alignment and derive the equivalence relation $\simeq_{\Aff}$ conditioned on equal rank between encoders.
To test the effect of unequal rank on affine alignment in an isolated setup, we artificially construct reduced-rank encoders through singular value decomposition (SVD) truncation.
In \autoref{fig:lora} we expectedly find a trend in how the encoder rank influences affine mappability.
We additionally highlight that the computed distances are rather symmetric, with no clear differences when mapping \emph{to} (${}^\rightarrow\mathcal{M}$), rather than \emph{from} ($\mathcal{M}^\rightarrow$) an encoder.
Finally, we note the trend in the diagonal indicating that mapping between encoders of the same rank becomes easier between lower-rank encoders.

\begin{figure}[t!]
  \centering
    \includegraphics[width=\textwidth]{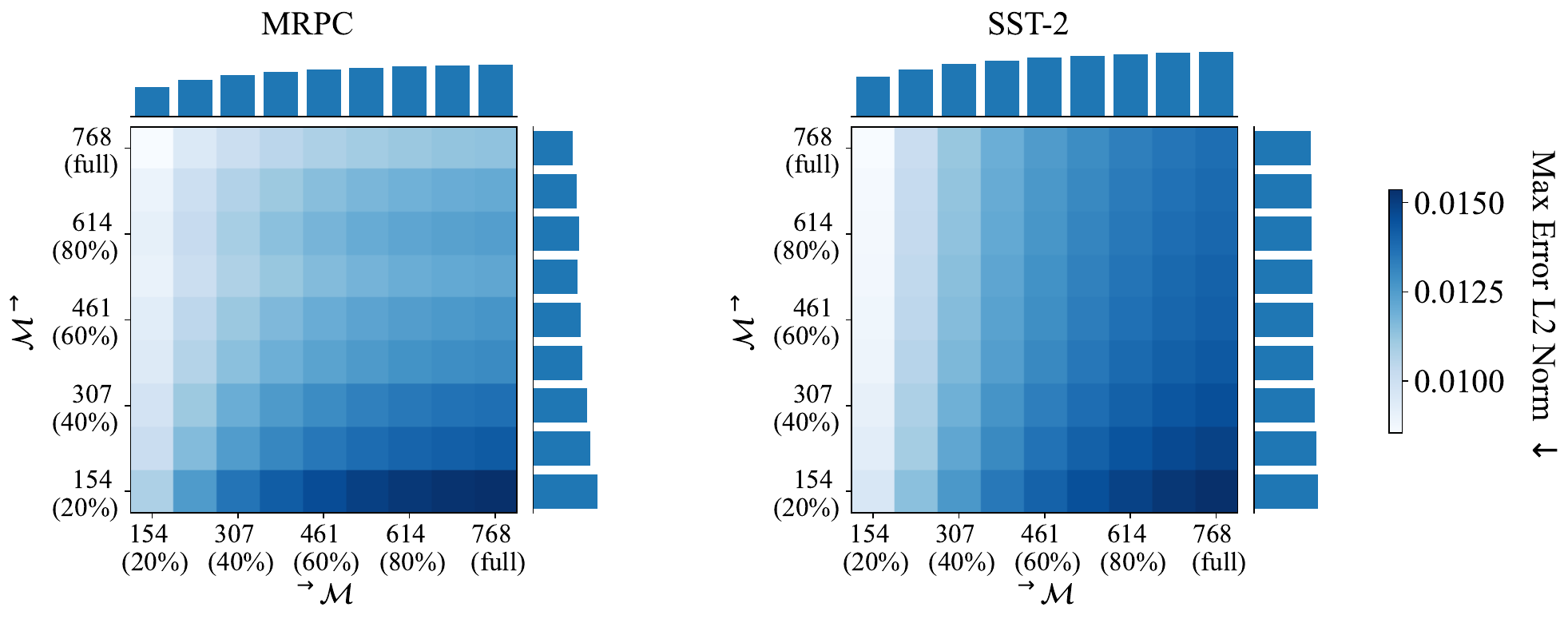}
    \caption{The effect of artificial rank deficiency averaged across \textsc{MultiBERT}s.
    For each pair of embeddings $\mH^{(i)}$ and $\mH^{(j)}$ from \textsc{MutliBERT}s $\mathcal{M}^{(i)}$ and $\mathcal{M}^{(j)}$ we generate additional rank-deficient encoders $\mH^{(i)}_{X\%}$ and $\mH^{(j)}_{Y\%}$ with $X, Y \in \{20\%, ..., 90\%\}$ of the full rank through SVD truncation.
    We compute $\onesidedmetric(\mH^{(i)}_{Y\%}, \mH^{(j)}_{X\%})$, for each pair of possible rank-deficiency and finally report the median across all \textsc{MultiBERT}s on row $X$ and column $Y$ on the grid.
    We additionally show row-, and column medians.
    }
    \label{fig:lora}
\end{figure}

\begin{figure}[t!]
  \centering
  \includegraphics[width=0.95\textwidth]{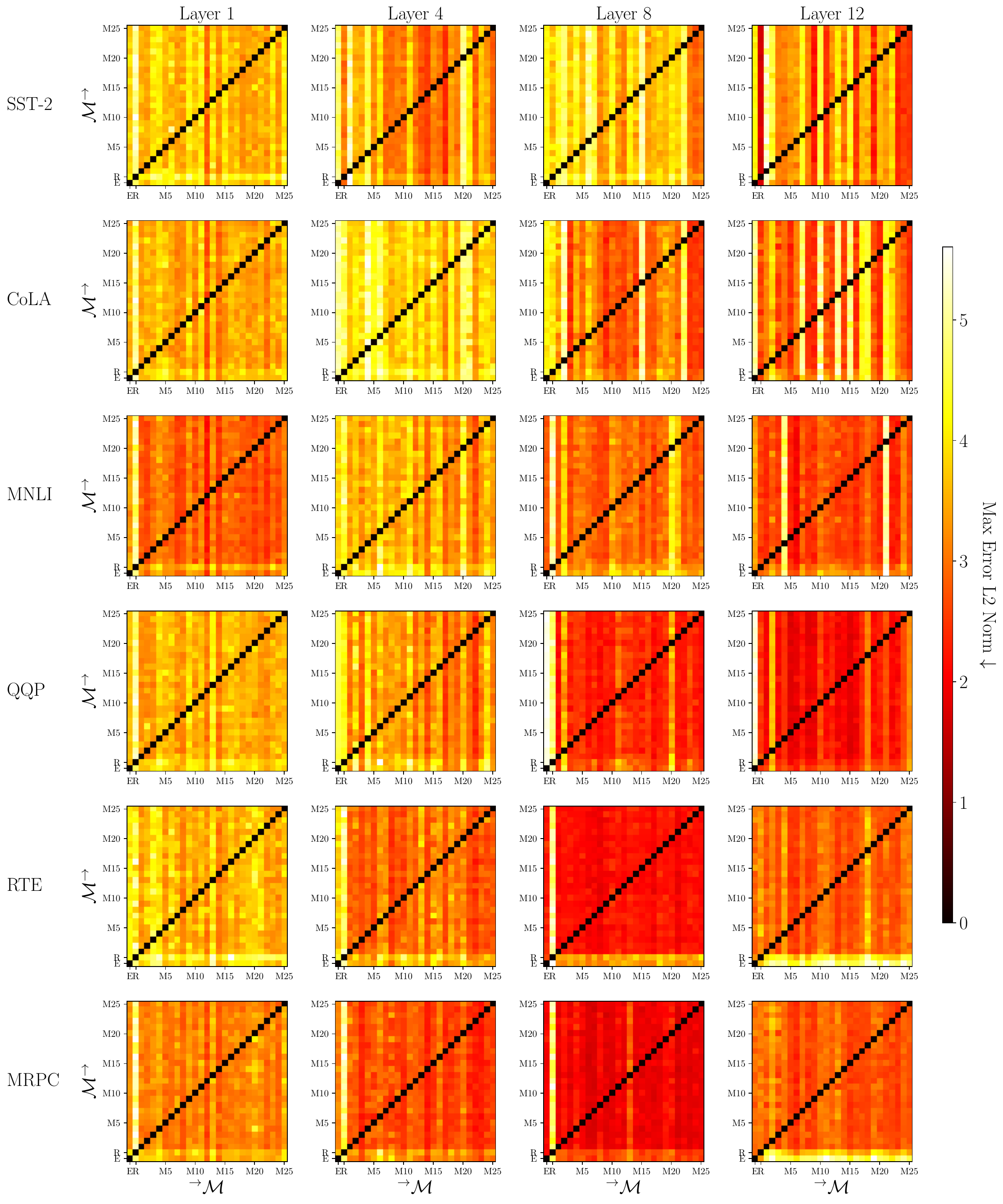}
  \caption{Asymmetry between ELECTRA (E), RoBERTa (R), and \textsc{MultiBERT} encoders (M1-M25) across layers. For each pair of the encoders $\mathcal{M}^{(i)}$ and $\mathcal{M}^{(j)}$, we generate training set embeddings $\mH^{(i)}, \mH^{(j)} \in \R^{\nsamples \times \featurespacedim}$ for the GLUE tasks SST-2, CoLA, MNLI, QQP, RTE, and MRPC. We then fit $\mH^{(i)}$ to $\mH^{(j)}$ with an affine map and report the goodness of fit through the max error L2 norm, i.e., an approximation of $\onesidedmetric (\mH^{(j)}, \mH^{(i)})$ on row $i$ and column $j$ of the grid. 
  }
  \label{fig:encoder-asymmetry-full}
\end{figure}

\end{document}